\def\inclapp{0}
\def\viewchanges{1}
\def\viewauthors{1}
\def\viewkeywords{0}
\def\usehyperlinks{1}
\def\addackn{0}
\def\preprint{1}
\preprint1
\newcommand{\myAND}{\\}
\let\myAND\AND
\let\AND\undefined
\newtheorem{theorem}{Theorem}[section]
\newtheorem{lemma}[theorem]{Lemma}
\newtheorem{definition}[theorem]{Definition}
\newtheorem{lem}[theorem]{Lemma}
\newtheorem{prop}[theorem]{Proposition}
\newtheorem{rem}[theorem]{Remark}
\newtheorem{cor}[theorem]{Corollary}
\newtheorem{example}[theorem]{Example}
\newtheorem{assumption}{Assumption}
\newtheorem{AssumptionAppendix}[theorem]{Assumption}
\crefname{assumption}{assumption}{assumptions}
\Crefname{assumption}{Assumption}{Assumptions}
\crefname{lem}{lemma}{lemmata}
\Crefname{lem}{Lemma}{Lemmata}
\let\P\undefined%
\newcommand{\1}{\mathbbm{1}}
\newcommand{\P}{\mathbb{P}}
\newcommand{\F}{\mathcal{F}}
\newcommand{\A}{\mathcal{A}}
\newcommand{\E}{\mathbb{E}} 
\newcommand{\N}{\mathbb{N}} 
\newcommand{\R}{\mathbb{R}} 
\newcommand{\C}{\mathcal{C}} 
\newcommand{\argmin}{\operatorname{argmin}}
\newcommand{\proj}[1]{\operatorname{proj}_{#1}}
\newcommand{\dd}{\,\mathrm d}
\let\del\undefined
\let\com\undefined
	\newcommand{\del}[1]{{\color{red}{#1}}}
	\newcommand{\com}[1]{{\color{orange}{#1}}}
	\newcommand{\del}[1]{}
	\newcommand{\com}[1]{}
\title{{Neural Jump ODEs as Generative Models}}
	\author{%
        \name Robert A. Crowell \email{robert.crowell@nyu.edu} \\
        \addr Department of Finance and Risk Engineering\\
        New York University
        \myAND 
        \name Florian Krach \email{florian.krach@me.com} \\
        \addr Department of Mathematics\\
        ETH Zurich
        \myAND 
         \name Josef Teichmann \email{josef.teichmann@math.ethz.ch} \\
        \addr Department of Mathematics\\
        ETH Zurich
	}
	\author{}
\providecommand{\keywords}[1]{\textbf{{Keywords:}} \textit{#1}}
\begin{document}

\maketitle

\begin{abstract}
In this work, we explore how Neural Jump ODEs \citep[NJODEs;][]{krach2022optimal} can be used as generative models for It\^o processes. Given (discrete observations of) samples of a fixed underlying It\^o process, the NJODE framework can be used to approximate the drift and diffusion coefficients of the process. Under standard regularity assumptions on the It\^o processes, we prove that, in the limit, we recover the true parameters with our approximation. Hence, using these learned coefficients to sample from the corresponding It\^o process generates, in the limit, samples with the same law as the true underlying process.
Compared to other generative machine learning models, our approach has the advantage that it does not need adversarial training and can be trained solely as a predictive model on the observed samples without the need to generate any samples during training to empirically approximate the distribution. 
Moreover, the NJODE framework naturally deals with irregularly sampled data with missing values as well as with path-dependent dynamics, allowing to apply this approach in real-world settings. In particular, in the case of path-dependent coefficients of the It\^o processes, the NJODE learns their optimal approximation given the past observations and therefore allows generating new paths conditionally on discrete, irregular, and incomplete past observations in an optimal way. 



\end{abstract}

\if\viewkeywords1
	\keywords{}
\fi

\section{Introduction}\label{sec:Introduction}

In this work, we consider a potentially path-dependent It\^o process, i.e., a stochastic process $X = (X_t)_{t \in [0,T]}$ solving  the $d$-dimensional SDE
\begin{equation}
    dX_t = \mu_t(X_{\cdot \wedge t})\dd t +  \sigma_t(X_{\cdot \wedge t})\dd W_t\,,
\end{equation}
where $W$ is a $m$-dimensional Brownian motion and $\mu, \sigma$ are the drift and diffusion coefficients, taking values in $\R^d$ and $\R^{d \times m}$ respectively. We assume $\mu$,  $\sigma$ to be fixed but unknown. 
Given a training set with discrete observations of independent samples of this process, our objective is to generate new independent trajectories of $X$. By  learning approximations $\hat\mu, \hat\sigma$ of the true coefficients $\mu$, $\sigma$ we can generate samples having the same law as $X$, provided our approximations are exact. 

\paragraph{Estimating Coefficients}
Thus the key element in this generative model approach is learning to approximate the coefficients $\mu, \sigma$. 
To do this, we use the Neural Jump ODE (NJODE) framework, which was first introduced in \citet{herrera2021neural} and then refined and extended several times in \citet{krach2022optimal, NJODE3, krach2024learning, heiss2024nonparametric}.
The NJODE is a model that allows to optimally predict stochastic processes in continuous-time. In this model, the underlying stochastic processes can be path-dependent and may have jumps (for simplicity we restrict to continuous trajectories). The predictions are based on discrete observations of the past, which may be irregular and incomplete. These observations generate the $\sigma$-algebras $\mathcal{A}_t$, which encode the currently available information at any time $t\in [0,T]$.
Theoretical guarantees show that the NJODE converges to the optimal prediction in the $L^2$-sense, meaning that the NJODE reconstructs the conditional expectation $\E[X_t | \mathcal{A}_s]$, for any $s \leq t$ as a limiting object. Similarly, the NJODE can, for example, be applied to moments of the process to learn to predict $\E[X_t X_t^\top | \mathcal{A}_s]$. 
These predictions can be used to estimate the coefficients of the It\^o process in the following way. 
For this, let us assume that NJODE models have already been trained on some training set such that they approximate conditional expectations $(s,t) \mapsto \E[X_t | \mathcal{A}_s]$ and $(s,t) \mapsto \E[X_t X_t^\top | \mathcal{A}_s]$ arbitrarily well. Then for any fixed time $t$, we can use the Euler-Maruyama scheme to discretise the next step of the process $X$ with a time step $\Delta > 0$ and corresponding independent Brownian increment $\Delta W_t \sim N(0, \Delta)$ as 
\begin{equation*}
    X_{t + \Delta} \approx X_t +  \mu_t \Delta + \sigma_t \Delta W_t.
\end{equation*}
Assuming that $X_t$ was observed, i.e., $X_t \in \mathcal{A}_t$, we can apply the conditional expectation on both sides to get
\begin{equation*}
    \E[X_{t + \Delta} | \mathcal{A}_t ] \approx X_t +  \E[\mu_t | \mathcal{A}_t ] \Delta,
\end{equation*}
which can be rearranged to the following estimator of $\mu$
\begin{equation}\label{eq:hat mu}
    \hat{\mu}_t^{\Delta} := \frac{\E[X_{t + \Delta} | \mathcal{A}_t ] - X_t}{\Delta} \approx \E[\mu_t | \mathcal{A}_t ].
\end{equation}
We note that the estimator $\hat \mu_t^{\Delta}$ can be expressed through the available information ($X_t$) together with the NJODEs approximation of the conditional expectation of $X$.
If $\mu_t$ is measurable with respect to the known information, i.e., $\mu_t \in \mathcal{A}_t$, then the RHS of \eqref{eq:hat mu} simplifies to $\E[\mu_t | \mathcal{A}_t ] = \mu_t$. Otherwise, $\hat \mu_t^{\Delta}$ is an estimator for the $L^2$-optimal approximation $\E[\mu_t | \mathcal{A}_t ]$ of $\mu_t$ given the available information. 

Applying first It\^o's formula to the components\footnote{%
For readability within the Introduction, we denote by $X^i$, $\mu^i$ the $i$-th element of the respective vectors and by $\sigma^i$ the $i$-th row of the matrix, for $1\leq i \leq d$. Moreover, for a matrix $M$ we denote the $(i,j)$-th element as $M_{i,j}$ and write $M = (M_{i,j})_{i,j}$. Vectors are, by default, assumed to be column vectors. Later, we will write the coordinate index as subscript.%
} %
of $X X^\top = (X^i X^j)_{i,j}$ and discretizing the resulting SDE for a $\Delta$-step with Euler-Maruyama as before, we get
\begin{equation*}
(X^iX^j)_{t+\Delta} \approx (X^iX^j)_t + \mu^i_tX^j_t\Delta+X^j_t \sigma_t^i\Delta W_t+X^i_t\mu^j_t \Delta+ X^i \sigma^j_t \Delta W_t+\sigma^i_t(\sigma^j_t)^\top \Delta.
\end{equation*} 
Taking the conditional expectation, using $\E[\mu_t | \mathcal{A}_t ] \approx \hat \mu_t^{\Delta}$ and rearranging, we get the estimator of $\Sigma_t^{\Delta} := \sigma_t^{\Delta} (\sigma_t^{\Delta})^\top$,
\begin{equation}\label{eq:hat Sigma}
    (\hat{\Sigma}_t^{\Delta})_{i,j} := \frac{\E[(X^i X^j)_{t + \Delta} | \mathcal{A}_t] - (X^i X^j)_t}{\Delta} - X^i_t \hat{\mu}^{\Delta,j}_t - X^j_t \hat{\mu}^{\Delta,i}_t \approx \E[ (\Sigma_t)_{i,j} | \mathcal{A}_t] .
\end{equation}
This estimator can be expressed through the NJODE approximation of the conditional expectation of $X$ and its moments, as well as the current information. Again, if $\sigma_t \in \mathcal{A}_t$, then the RHS of \eqref{eq:hat Sigma} simplifies to $\E[\Sigma_t | \mathcal{A}_t ] = \Sigma_t$. Otherwise, $\hat \Sigma_t^{\Delta}$ is an estimator for the $L^2$-optimal approximation $\E[\Sigma_t | \mathcal{A}_t ]$ of $\Sigma_t$ given the available information. 

\paragraph{A better estimator for $\Sigma$}
By definition, the instantaneous variance matrix $\Sigma_t$ is symmetric and positive semi-definite. These properties also hold for $\E[ \Sigma_t | \mathcal{A}_t]$, since by the linearity of the expectation we have for any fixed vector $v \in \R^d$ that 
\begin{equation*}
    v^\top \E[ \Sigma_t | \mathcal{A}_t] v = \E[v^\top  \Sigma_t v| \mathcal{A}_t] = \E[ \lvert \sigma_t^\top v \rvert_2^2\, |\, \mathcal{A}_t] \geq 0,
\end{equation*}
showing positive semi-definiteness (and symmetry is trivially true).
However, the estimator $\hat \Sigma^{\Delta}$, as defined in \eqref{eq:hat Sigma}, might not satisfy these properties due to numerical errors in the estimation of the individual components.
This is problematic, since then we cannot find a matrix square root of it, which we need for the generation of samples.
Therefore, we suggest to rewrite the estimator, using the definition of $\hat \mu^{\Delta}$, as
\begin{equation}\label{eq:hat Sigma 2}
    \hat{\Sigma}_t^{\Delta} := \frac{ 1}{\Delta} \E[(X_{t + \Delta} - X_t)(X_{t + \Delta} - X_t)^\top | \mathcal{A}_t],
\end{equation}
which satisfies the properties by definition.
To compute this estimator with the NJODE framework, we define the \emph{squared increments} process 
\begin{equation*}
    Z_{t} := (X_t - X_{\tau(t)})(X_t - X_{\tau(t)})^\top, \quad 0 \leq t \leq T,
\end{equation*} 
where $\tau(t)$ is the last observation time before time $t$\footnote{This definition only makes sense if $X$ has complete observations and needs to be adapted accordingly by taking the last coordinate-wise observation time instead of $\tau(t)$.}.
Then, by training a NJODE to predict $Z$ using the generalized training framework of \citet{krach2024learning}, we learn to approximate \eqref{eq:hat Sigma 2} up to a known constant factor (by evaluating it at $\Delta$ after the last observation time). A priori, the NJODE output predicting $Z$ does not necessarily satisfy the symmetry and positive semi-definiteness properties though. However, denoting the output of the NJODE model as $G \in \R^{d\times d}$, we can define $S=GG^\top$ and train $S$ instead of $G$ to predict $Z$. Then the necessary properties are hardcoded into $S$ by its definition and we directly have access to a square root of $S$ given by the model output $G$. 

We note that the NJODE model predicting $Z$ needs to get the information of $\mathcal{A}_t$ as input, i.e., the observations of $X$ and not (only) the observations of $Z$ (which have less information). The example of a geometric Brownian motion, where $\Sigma_t$ depends on the value $X_t$, exemplifies that using observations of $Z_t$ as input is not sufficient. Hence, this puts us in an input-output setting \citep{heiss2024nonparametric}, where $Z$ is the output process learned from the input processes $X,Z$ (theoretically, $X$ would suffice as input process, but additionally using $Z$ can simplify the learning). 

\paragraph{Instantaneous parameter estimation} The estimators heuristically derived above are natural to study and unbiased in the limit $\Delta\to 0$. However, before passing to the limit, they may be biased. We demonstrate that the NJODE framework is versatile and powerful enough to overcome this bias even without passing to the computationally infeasible limit $\Delta\to 0$. Indeed, in \Cref{sec:Estimating the Instantaneous Coefficients}, we show how with a more involved estimation procedure, we can accurately learn the instantaneous coefficients.

\paragraph{Approximating the law of $X$}
We note that we can only estimate the square of $\sigma$, since the law of $X$ (which we ultimately use through the conditional expectations) is determined by $\Sigma$ regardless of its true square root $\sigma$.
Vice versa, any square root $\hat \sigma^{\Delta}$ of $\hat \Sigma^{\Delta}$ can interchangeably be used to define the SDE
\begin{equation}\label{eq:tilde X SDE}
    \dd \tilde X_t = \hat \mu_t^{\Delta} \dd t + \hat \sigma_t^{\Delta} \dd W_t,
\end{equation}
whose solution $\tilde X$ approximates $X$ in law. Therefore, new samples approximating the distribution of $X$ can be generated by sampling from \eqref{eq:tilde X SDE}. 
As we show in \Cref{thm:thm.2}, in the limit $\Delta \to 0$, the law of $\tilde X$ converges to the law of $X$, under the assumption that $\mu_t, \sigma_t \in \mathcal{A}_t$. 

\paragraph{Sample generation}
In practice, a discretization scheme, like Euler-Maruyama, is used to sample from the SDE \eqref{eq:tilde X SDE}, by iteratively computing the coefficient estimators $\hat \mu, \hat \sigma$ given the past sampled points (or observations), using them to generate the next point and appending this to the generated sequence (and therefore to the available information). This procedure can by used to generate new sequences starting from any given initial point $X_0$ or, alternatively, from any fixed starting sequence $(X_0, X_{t_1}, \dotsc, X_{t_k})$ for observation times $0 < t_1 < \dotsb < t_k < T$.
Moreover, if the starting point or sequence has missing values, the approach naturally extends by first predicting $\E[X_0 | \mathcal{A}_t]$ or $\E[X_{t_k} | \mathcal{A}_{t_k}]$, respectively, as starting point for the further generation of the samples.

\subsection{Related Work}\label{sec:Related Work}
In comparison to many of the standard machine learning approaches for generative models in the context of time series generation \citep[e.g., neural SDEs trained as GANs;][]{kidger2021neural}, our approach has the advantage of being trained in a pure prediction setting, without the need to actually generate samples for the training procedure. This makes our approach more efficient in training. Moreover, the training works by minimizing a well-defined, MSE-type loss function, which admits a unique optimizer (up to indistinguishability). Hence, we can derive theoretical convergence guarantees, implying the convergence of samples from our generative approach to the true distribution. 

In contrast to this, GAN-type approaches for time-series generation \citep{chen2018model,yoon2019time,henry2019generative,wiese2020quant,xu2020cot,cuchiero2020generative,gierjatowicz2020robust,kidger2021neural,cont2022tail,flaig2022scenario,liu2022time,rizzato2023generative} build on two competing players in a zero-sum minimax game, which does not necessarily have Nash equilibria \citep{pmlr-v119-farnia20a}, and even if they exist, convergence to them is not certain \citep{pmlr-v80-mescheder18a}. In line with this, GANs have frequently been reported to fail to converge to a stable solution in practice \citep{pmlr-v80-mescheder18a}.
Typical generative models for time-series generation are neural SDEs, neural diffusion models, and deep conditional step-wise generators. 
Even if such models do not rely on adversarial training \citep{neuralSDE1, remlinger2022conditional,liao2020conditional,buehler2020data,desai2021timevae,huang2024generative,lu2024generative,acciaio2024time,jahn2025trajectory}, their need to generate samples for the training procedure, to empirically approximate the generator's distribution, can make the training more inefficient.
Moreover, models that use expected values of the evaluation of a function of the generated process at certain times in the loss function \citep{neuralSDE1,cuchiero2020generative}, can usually only control marginal distributions of the process but not its entire law, as is the case with our approach.

Similarly to our approach, \citet{cohen2023arbitrage} use a neural SDE, where they directly learn the coefficients without sampling; however, unlike us, their method does not inherently deal with incomplete observations in the training data. Moreover, while they construct the model such that it is arbitrage free, it is not studied whether the model converges to the true law of the underlying data (this is not an objective since it is assumed that only one realization of the underlying process is available, as typical in financial time series).

\subsection{Outline of the Work}\label{sec:Outline}

In this work, we propose a new, fully forecast-based, deep learning generative framework for diffusion processes, as heuristically described in \Cref{sec:Introduction}. The approach consists of two well-separated steps. First, NJODE forecasting models are trained to approximate conditional expectations. For this, the necessary problem setting with details on notation and assumptions is given in \Cref{sec:Problem Setting}. Then the NJODE formulation and its training framework are given in \Cref{sec:The Neural Jump ODE Model for Coefficient Estimation}. In \Cref{sec:Details Assumptions and Theoretical Guarantees for the Coefficient Estimates} the idealized coefficient estimators are defined, and trained NJODE forecasting models are used to define realizable coefficient estimators, which are proven to converge to the true coefficients. 
In \Cref{sec:Estimating the Instantaneous Coefficients}, the NJODE method is further refined to directly estimate the instantaneous coefficients, which are the limiting objects of the step-wise estimates when the step size goes to $0$.
These estimates (step-wise or instantaneous) are then used to generate samples, whose law is proven to converge to the true distribution in \Cref{sec:The Generative Procedure}.
Experiments showing the applicability of this approach are presented in \Cref{sec:Experiments}.

\section{Problem Setting}\label{sec:Problem Setting}
We build on previous work on NJODEs and therefore follow their setting, particulalry those of \citet{krach2022optimal, heiss2024nonparametric, krach2025operator}.

\subsection{Stochastic Process, Random Observation Times and Observation Mask}
\label{sec:Stochastic Process, Random Observation Times and Observation Mask}
Let $d\in \N$ be the dimension and $T>0$ be a fixed time horizon. We work on a filtered probability space $(\Omega, \mathcal{F}, \mathbb{F},\mathbb{P})$, where the filtration $\mathbb{F}=(\mathcal{F}_t)_{t \in \mathbb{R}_+}$ satisfies the usual conditions, i.e.\ , the $\sigma$-field $\mathcal{F}$ is $\mathbb{P}$-complete, $\mathbb{F}$ is right-continuous and $\mathcal{F}_0$ contains all $\mathbb{P}$-null sets of $\mathcal{F}$.  
On this filtered probability space, we consider an adapted, $d$-dimensional, continuous stochastic process $X :={(X_t)}_{t \in [0,T]}$, which satisfies the following assumption.
\begin{assumption}\label{ass:1}
    The dynamics of the diffusion process $X$ are given by
    \begin{equation}\label{eq:def X}
        X_t = x_0 + \int_0^t \mu_s(X_{\cdot \wedge s})\dd s + \int_0^t \sigma_s(X_{\cdot \wedge s})\dd W_s,\, \qquad \text{for}~t\in [0, T],
    \end{equation}
    where $\mu$ and $\sigma$ are progressively measurable functionals taking values in $\R^d$ and $\R^{d\times m}$, respectively, which are uniformly bounded and jointly continuous, and $W=(W_t)_{t\in [0, T]}$ is an $m$-dimensional standard Brownian motion. 
\end{assumption}
In this work, we distinguish between the training set, which is used to learn approximating the necessary conditional expectations with NJODEs to get estimates of the coefficients, and the starting sequence and generated data in inference, when the approach is used in a generative way.

\subsection{Information \texorpdfstring{$\sigma$}{sigma}-algebra}\label{sec:information sigma-algebra}
For the training set, we assume that a random number of $n\in \N$ observations take place at the random $\mathbb F$-stopping times
\begin{align}\label{eqn:obs.times}
    0 = t_0 < t_1 < \dotsb < t_{n} \le  T 
\end{align}
and denote by $\bar n=\sup \left\{k \in \N \, | \, \P(k = n) > 0 \right\} \in \N \cup\{\infty\}$ the maximal value of $n$.
Note that this set-up allows for a possibly unbounded number of observations in the finite time interval $[0,T]$.
Moreover, we define the random functions
\begin{equation*}
    \tau(t) :=  \max\{ t_k \colon   t_k \leq t \} ,\qquad
    \kappa(t) := \max\{ k \colon  t_k \leq t \},
\end{equation*}
which denote the last observation time and the number of observation times (or zero if no observation was made yet) before time $t \in [0,T]$.
Observations can have missing values, which is formalised through the observation mask, a sequence of random variables $M = (M_k)_{0 \leq k \leq \bar n}$ taking values in  $\{ 0,1 \}^{d}$. If $M_{k,j}=1$, then  the $j$-th coordinate $X_{t_k,j}$ is observed at observation time $t_k$. By abuse of notation, we  also write $M_{t_k} := M_{k}$ and assume that $M_{t_k} \in \F_{t_k}$. 

The information available at time $t$ is given by the values of the process $X$ at the observation times when not masked, as well as the observation times and masks until $t$. This leads 
to the \emph{filtration of the currently available information} $\mathbb{A} := (\mathcal{A}_t)_{t \in [0, T]}$ given by
\begin{equation*}
\A_t := \boldsymbol{\sigma}\left(X_{t_i, j}, t_i, M_{t_i} \mid t_i \leq t,\, j \in \{1 \leq l \leq d \mid M_{t_i, l} = 1 \} \right) \subseteq \F_t,
\end{equation*}
where $\boldsymbol{\sigma}(\cdot)$ represents the generated $\sigma$-algebra.
By the definition of $\tau$, we have $\mathcal{A}_t = \mathcal{A}_{\tau(t)}$ for all $t \in [0, T]$. 
Additionally,  for any fixed observation (or stopping) time $t_k$, the stopped and pre-stopped\footnote{The stopped $\sigma$-algebra \citep[Definition 2.37]{KarandikarRao2018} is defined as $\mathcal{F}_{\tau} = \{A \in \sigma(\cup_{t} \mathcal{F}_{t}) : A \cap \{\tau \leq t\} \in \mathcal{F}_{t} \, \forall t\}$, where $\tau$ is the stopping time. The pre-stopped $\sigma$-algebra \citep[Definition 8.1]{KarandikarRao2018} is defined as $\mathcal{F}_{\tau-} = \sigma\left(\mathcal{F}_0 \cup \{A \cap \{t < \tau\} : A \in \mathcal{F}_t, \, t < \infty\}\right)$, where $\tau$ is the stopping time.} $\sigma$-algebras at $t_k$ are
\begin{equation*}
\begin{split}
\mathcal{A}_{t_k} &:= \boldsymbol{\sigma}\left(X_{t_i, j}, t_i, M_{t_i} \,\middle|\, i\leq k,\, j \in \{1 \leq l \leq d | M_{t_i, l} = 1  \} \right), \\
\mathcal{A}_{t_k-} &:= \boldsymbol{\sigma}\left(X_{t_i, j}, t_i, M_{t_i}, t_k \,\middle|\, i < k,\, j \in \{1 \leq l \leq d | M_{t_i, l} = 1  \} \right) = \mathcal{A}_{t_{k-1}} \vee \boldsymbol{\sigma}(t_k).
\end{split}
\end{equation*} 
We define the $i$-th observation at time $t_i$ as $O_{i} := (M_{t_i} \odot X_{t_i}, t_i,  M_{i_t}) \in \mathscr O \coloneqq \R^d\times \R \times \R^d$. This gives rise to the \emph{information process} $O:[0, T]\times \Omega \to \mathscr O^{\mathbb N}$ given by 
$$ (t, \omega) \mapsto O_{[0, t]}(\omega) \coloneqq (O_1, \dotsc, O_k, 0, \dotsc) \in \mathscr O^{\N}\,.$$
Since $(t_i)_{i \in \N}$ are $\mathbb F$-stopping times, the process $O=(O_{[0, t]})_{t \in [0,T]}$ is $\mathbb F$-progressive. We then call $\boldsymbol\sigma(O_{[0,t]}) = \mathcal{A}_t$ the \emph{information $\sigma$-algebra}, so that $\mathbb A$ is exactly the filtration of currently available information defined above. This makes $O$ also $\mathbb A$-progressive.

\subsection{Notation and Assumptions}\label{sec:Notation and assumptions}
We are interested in the conditional expectation processes of $X$ given the currently available information, i.e., in the process $(\E[X_t\, |\, \mathcal A_t])_{t \in [0, T]}$. By \citep[Cor.~7.6.8]{cohen2015stochastic}, we can find an $\mathbb A$-progressive modification of this process which we denote by $\hat{X}=(\hat X_t)_{0 \leq t \leq T}$, and which satisfies 
\begin{equation*}
    \hat{X}_t := \E[X_t\, | \,\A_t].    
\end{equation*}

Since $\hat X$ and $O$ are $\mathbb A$-progressive, the Doob-Dynkin lemma \citep[Lemma~1.14]{kallenberg2021foundations} implies that there exists a measurable map 
\begin{equation*}
    F^X : [0,T]  \times \mathscr O^{\N} \to \R^{d}\,, \qquad (t, o) \mapsto F^X(t,o):= F^X_{t}(o),
\end{equation*}
satisfying $\hat{X}_{t} = F^X(t, O_{[0,t]})$. 
Similarly, for the process $\hat{Z}=(\hat Z_t)_{0 \leq t \leq T}$ defined via $\hat{Z}_t := \E[Z_t \,|\, \A_t]$, or more specifically as an $\mathbb A$-progressive modification thereof, we define $F^Z$ in the same way.

We make the following assumptions on our framework and denote by $f^X,f^Z$ the generalized time derivatives of $F^X,F^Z$ (see \Cref{sec:Applying the NJODE in the Generative Setting} for more details).

\begin{assumption} \label{assumption:2}
For every $1\leq k, l \leq \bar n$, $M_k$ is independent of $t_l$ and $n$ and 
$ \P ((M_{k, i}) =1 ) > 0$ for every component $1 \leq i \leq d$ of the vector  (every component can be observed at any observation time and point) and $M_0=1$.
\end{assumption} 
\begin{assumption} \label{assumption:3}
The random number of observation times $n$ is integrable, i.e., $\E[n] < \infty$.
\end{assumption} 
\begin{assumption} \label{assumption:4}
The process $X$ is independent of the observation framework, i.e., of the random variables $n, (t_k, M_k)_{k \in \N}$. 
\end{assumption} 
\begin{rem}
    We assume complete observations at $t_0$ to ensure that the process $Z$ is well defined. Generalizations of this assumption are possible, but get more involved.
\end{rem}
\begin{rem}
    The independence \Cref{assumption:2,assumption:4} can be replaced by conditional independence assumptions as formulated in \citet[Section~4]{NJODE3}.
\end{rem}

We use the following (pseudo-)distance functions (based on the observation times) between processes and define indistinguishability with them.
\begin{definition}\label{def:indistinguishability}
Fix $r \in \N$ and set $c_0(k) := (\P (n \geq k))^{-1}$.
The family of (pseudo) metrics $d_k$, $1 \le k \le \bar n$,  for  two c\`adl\`ag $\mathbb{A}$-adapted processes $\eta, \xi  : [0,T] \times \Omega \to \R^{r}$  is defined as 
\begin{equation}\label{equ: pseudo metric}
    d_k (\eta, \xi) = c_0(k)\,  \E\left[ \1_{\{ k \le n\}} \left( | \eta_{t_k-} - \xi_{t_k-} |_1 + | \eta_{t_k} - \xi_{t_k} |_1 \right) \right].
\end{equation}
We call the processes \emph{indistinguishable at observation points}, if $d_k(\eta, \xi)=0$ for every $1 \leq k \leq \bar n$.
\end{definition}

In the following, we show that with \Cref{ass:1}, all necessary conditions are satisfied to apply the NJODE framework to learn to predict the processes $X$ and $Z$ as in \citet{krach2025operator}.
We note that we use \citet{krach2025operator} instead of \citet{heiss2024nonparametric}, since it only requires measurability, but not continuity, of the respective functions.

\begin{prop}\label{prop:NJODE assumptions satisfied}
    If \Cref{ass:1,assumption:2,assumption:3,assumption:4} are satisfied, then the processes $X, Z$ and the observation framework satisfy Assumptions~1 to~7 of \citet{krach2025operator}, hence, the main convergence results for NJODEs \citep[Theorems~4.1 and~4.4]{krach2025operator} can be applied.
\end{prop}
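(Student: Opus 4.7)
The plan is to verify each of the seven assumptions of \citet{krach2025operator} in turn for both $X$ and $Z$, relying on standard SDE theory and invoking the existence of generalized time derivatives for the conditional expectation processes $\hat X$ and $\hat Z$. Since most of the assumptions on the observation framework are taken essentially verbatim from the NJODE literature, the main new content is verifying the regularity of the output processes and their conditional expectations.

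First I would dispatch the assumptions concerning the processes themselves. \Cref{ass:1} directly gives that $X$ is a continuous, $\mathbb F$-adapted process solving an SDE with uniformly bounded, jointly continuous, progressively measurable coefficients. By standard SDE estimates using Burkholder--Davis--Gundy and Gr\"onwall, the uniform boundedness of $\mu$ and $\sigma$ yields $\E[\sup_{t\in[0,T]} |X_t|^p]<\infty$ for every $p\geq 1$, giving in particular the square- and fourth-moment integrability needed downstream. For $Z_t = (X_t - X_{\tau(t)})(X_t - X_{\tau(t)})^\top$, I would note that $\tau$ is $\mathbb A$-adapted piecewise constant between observation times, so $Z$ is c\`adl\`ag, $\mathbb F$-adapted, and dominated pathwise by $4\sup_{t\in[0,T]}|X_t|^2$, hence has all moments finite. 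This establishes the standing integrability and path-regularity assumptions on both output processes.

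Second, the observation-framework assumptions of \citet{krach2025operator} are matched directly by our \Cref{assumption:2,assumption:3,assumption:4}: integrability of $n$, independence of $X$ from the observation pattern, independence of masks from times and from $n$, positivity of each coordinate mask, and complete observation at $t_0$. No real work is needed here beyond identifying the correspondence.

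The main obstacle is verifying the regularity of $\hat X$ and $\hat Z$, namely the measurability of $F^X,F^Z$ and the existence of integrable generalized time derivatives $f^X,f^Z$. Measurability follows from the Doob--Dynkin construction already invoked in defining $F^X$, and analogously for $F^Z$. For the generalized time derivative, between any two consecutive observation times we can write
\begin{equation*}
\hat X_t - \hat X_{\tau(t)} \;=\; \E\!\left[\int_{\tau(t)}^{t} \mu_s(X_{\cdot\wedge s})\dd s \;\middle|\; \mathcal A_{\tau(t)}\right],
\end{equation*}
which is absolutely continuous in $t$ with a derivative bounded by $\|\mu\|_\infty$, by uniform boundedness of $\mu$ together with Fubini. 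The same strategy applies to $\hat Z$ after applying It\^o's formula to $(X^i - X^i_{\tau(t)})(X^j - X^j_{\tau(t)})$: the resulting dynamics have drift $(\Sigma_t)_{i,j} + X^i_t\mu^j_t + X^j_t\mu^i_t - (\text{terms vanishing at }\tau(t))$ and a martingale part, so taking conditional expectations yields an $\mathbb A$-absolutely continuous version with bounded derivative, again by uniform boundedness of $\mu, \sigma$ and finiteness of moments of $X$. The integrability requirements on $f^X, f^Z$ imposed by \citet{krach2025operator} then follow immediately from these uniform bounds. I expect the only subtlety to be matching the precise joint measurability and integrability norms in \citet{krach2025operator} to these objects, but since those assumptions were formulated specifically to accommodate SDE-driven output processes under measurability rather than continuity of the driver, the verification should be essentially bookkeeping.
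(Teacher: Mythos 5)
Your overall route is the same as the paper's: match the observation-framework assumptions verbatim, use boundedness of $\mu,\sigma$ to get all moments of $X$ (and hence of $Z$), and verify the measurability/generalized-derivative assumption by writing $\E[X_t\,|\,\mathcal A_{\tau(t)}]$ as $\E[X_{\tau(t)}\,|\,\mathcal A_{\tau(t)}]+\int_{\tau(t)}^t\E[\mu_s(X_{\cdot\wedge s})\,|\,\mathcal A_{\tau(t)}]\dd s$ via Fubini and the martingale property of the stochastic integral, and analogously for $\hat Z$ after an application of It\^o's formula. So the core of the argument is fine and matches the paper's appendix proof.

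There is, however, one genuine gap: you never address the assumption that the target process is a.s.\ not observed at a jump (Assumption~2 of \citet{krach2025operator}, restated in the paper's appendix). The process $Z_t=(X_t-X_{\tau(t)})(X_t-X_{\tau(t)})^\top$ jumps (to $0$, in the complete-observation case) at \emph{every} observation time by construction, so it violates this assumption as stated; calling $Z$ ``c\`adl\`ag'' does not resolve this. The paper handles it explicitly by invoking Remark~2.4 of \citet{krach2025operator}, using that both the left and right limits $Z_{t_k-}$ and $Z_{t_k}$ are available at observation times; without this (or an equivalent device) the convergence theorems cannot simply be applied to $Z$. Two smaller imprecisions: the generalized derivative of $\hat Z$, namely $\E[2(X_s-X_{\tau(t)})\mu_s^\top+\Sigma_s\,|\,\mathcal A_{\tau(t)}]$, is not uniformly bounded (since $X$ is unbounded); what you actually need, and what holds, is the integrability bound of the assumption, which follows from boundedness of $\mu,\sigma$ together with the moment bounds on $X$. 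Likewise, the martingale property of $\int 2(X_s-X_{\tau(t)})(\sigma_s\dd W_s)^\top$ requires a short justification (the integrand is unbounded; the paper cites a lemma from Protter using integrability of $X$ and boundedness of $\sigma$) rather than being automatic.
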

For the proof and more details on the assumptions of \citet{krach2025operator} see \Cref{sec:Applying the NJODE in the Generative Setting}.

\section{The Neural Jump ODE Model for Coefficient Estimation}\label{sec:The Neural Jump ODE Model for Coefficient Estimation}
We use the NJODE model defined in \citet{krach2022optimal,heiss2024nonparametric} to predict the processes $X,Z$ with which we can derive estimators for $\mu,\Sigma$, as outlined in \Cref{sec:Introduction}.
In the following, we give a heuristic overview of the input-output NJODE model and its loss function, while referring to \citet[Definition~3.3]{heiss2024nonparametric} for the exact definition and details.
The  \textit{Input-Output Neural Jump ODE} model is given by
    \begin{equation}\label{equ:PD-NJ-ODE}
\begin{split}
H_0 &= \rho_{\theta_2}\left(0, 0, U_0 \right), \\
dH_t &= f_{\theta_1}\left(H_{t-}, t, \tau(t), U_{\tau(t)}   \right) dt  + \left( \rho_{\theta_2}\left( H_{t-}, t, U_{t}  \right) - H_{t-} \right) dn_t, \\
G_t &=  g_{ \theta_3}(H_t),
\end{split}
\end{equation}
where $U$ is the input process, $n_t$ counts the current number of observations and $G$ is the models output process. The parametric functions $f_{\theta_1},~\rho_{\theta_2}$ and $ g_{ \theta_3} $ are (bounded output) feedforward neural networks with trainable weights $\theta = (\theta_1, \theta_2, \theta_3) \in \Theta$. We write $\Theta_m \subset \Theta$ to denote the compact subset of all possible NN weights that allow the maximum widths and depths (and therefore also the dimension of $H$) to be $m$ and whose norms are bounded by $m$.
For a target output process $V$, we define the theoretical loss function as
\begin{align}
\Psi(V, \eta) &:= \E\left[ \frac{1}{n} \sum_{i=1}^n  \left(  \left\lvert \proj{V} (M_i) \odot ( V_{t_i} - \eta_{t_i} ) \right\rvert_2 + \left\lvert \proj{V} (M_i) \odot (V_{t_i-} - \eta_{t_{i}-} ) \right\rvert_2 \right)^2  \right], \label{equ:Psi} 
\end{align}
where $\odot$ is the element-wise (Hadamard) product and $\proj{V}$ denotes the projection onto the coordinates corresponding to the output variable $V$. 
The empirical loss function $\hat \Psi_N$ is given by the empirical approximation of the expectation with $N$ training samples.

The suggested approach is to use two independent instances of the NJODE model with output processes $G_1^\theta, G_2^\theta$ to predict $X$ and $Z$, respectively, where the model predicting $Z$ must be an input-output model \citep{heiss2024nonparametric} additionally taking $X$ as input, i.e., $U=(X,Z)$, $V=Z$. Since $V$ is a subprocess of $U$, we use the original loss function instead of the IO loss function, following the suggestion in \citet[Section~7.1]{heiss2024nonparametric}.
The process $Z$ has jumps at observation times, which we can deal with by using \citet[Remark~2.4]{krach2025operator}, since we have the left and right limit of the jumps (in particular, the right limit is always $0$ in the case of complete observations)\footnote{%
We now have the observations $Z_{t_k-}$ and $Z_{t_k}$ that can be fed as inputs to the model. To correctly learn the jumps, the model has to get $Z_{t_k}$ as input at the jump. If  $Z_{t_k-}$ doesn't provide additional information, as is the case for us, since the input $X$ already carries all information, then we do not need to feed $Z_{t_k-}$ as input to the model. We use this approach in our implementation. %
}.
As outlined in \Cref{sec:Introduction}, we train the NJODE predicting $Z$ with the loss function
$$\Phi_2(\theta) := \Psi(Z, G_2^\theta (G_2^\theta)^\top),$$ 
while we use the standard loss 
$$\Phi_1(\theta) := \Psi(X, G_1^\theta)$$ 
for the NJODE predicting $X$. In particular, the NJODE prediction for $Z$ is given by $S_2^\theta := G_2^\theta (G_2^\theta)^\top$\footnote{It is a choice of naming, whether one calls $S_2^\theta$ or $G_2^\theta$ the output of the corresponding NJODE model, i.e., whether the squaring is included in (or under the hood of) the model architecture or not. Since the squaring is important to satisfy the constraints (see \Cref{sec:Introduction}), we use the given notation to explicitly state this operation, but we refer to both $G_2^\theta$ and $S_2^\theta$ as model output depending on the context.}.

\begin{rem}\label{rem:long-term prediction suggested}
    In settings where regular and complete observations are available for training the NJODE model, we recommend using the training approach for long-term predictions of \citet{krach2024learning} to get more accurate long-term estimates of drift and variance. Because these estimates are used iteratively to generate the next steps (without insertion of true observations), they have to be accurate over a long time horizon. Otherwise, the generated paths may diverge from the true law, as the prediction of the conditional expectation diverges from the true one \cite[see][Figure~2]{krach2024learning}. Learning long-term predictions should therefore also decrease the short-term errors if small short-term errors blow up in the long run. Additionally, the training approach for long-term predictions leads to a more efficient usage of the training data in the case of regular complete observations, which further reduces the error. 
    Using this generalized training approach, the NJODE models can predict the conditional expectations at any time $t \in [0,T]$ given information up to time $s \leq t$\footnote{This is done by feeding the observations until $s$ as input to the NJODE model and then continuing the prediction until $t$ without any further inputs.}. 
\end{rem}
\begin{definition}
    To simplify the notation, we write $G_{s,t-s}^\theta$ for the NJODE prediction of $\E[X_t | \mathcal{A}_s]$ and $S_{s,t-s}^\theta$ for the NJODE prediction of $\E[Z_{t,s} | \mathcal{A}_s]$, for any $0\leq s \leq t \leq T$. 
    In particular, $G_{s,t-s}^\theta$ corresponds to the output of the first NJODE model $G_1^\theta$ and $S_{s,t-s}^\theta$ corresponds to the output of the second NJODE model $S_2^\theta$ or $G_2^\theta$, respectively.
\end{definition}

\section{Details, Assumptions and Theoretical Guarantees for the Coefficient Estimates}\label{sec:Details Assumptions and Theoretical Guarantees for the Coefficient Estimates}
 
In this section, we give theoretical guarantees for the estimation of coefficients and sample generation with the NJODE model. The generative procedure, briefly described in \Cref{sec:Introduction}, will then be described in \Cref{sec:The Generative Procedure}.
Further details on the model, its implementation and training are given in \Cref{sec:The Neural Jump ODE Model for Coefficient Estimation}.

In the following, we first define the different coefficient estimators. For a given step size $\Delta > 0$, the \emph{idealized} drift and diffusion estimators are
\begin{align}
    \hat{\mu}_t^\Delta &:= \frac{1}{\Delta} E[X_{t + \Delta} - X_t | \mathcal{A}_t ] \label{equ:mu hat Delta idealized}\\
    \hat{\Sigma}_t^{\Delta} &:= \frac{ 1}{\Delta} \E[(X_{t + \Delta} - X_t)(X_{t + \Delta} - X_t)^\top | \mathcal{A}_t]. \label{equ:sigma hat Delta idealized}
\end{align}
These idealized estimators are, in general, not computable, since the necessary conditional expectations are not accessible. Therefore, we use their approximations using the NJODE model (cf.\ \Cref{sec:The Neural Jump ODE Model for Coefficient Estimation}) for the following \emph{realizable NJODE} drift and diffusion estimators
\begin{align}
    \hat{\mu}_t^{\Delta, \theta} &:= \frac{1}{\Delta} (G_{t,\Delta}^\theta - G_{t,0}^\theta), \\
    \hat{\Sigma}_t^{\Delta, \theta} &:= \frac{ 1}{\Delta} S_{t,\Delta}^\theta.
\end{align}
Note that in case $X_t \in \mathcal{A}_t$, the estimate $G_{t,0}^\theta$ can be replaced by $X_t$ to recover the standard estimate for \eqref{eq:hat mu} as described in \Cref{sec:Introduction}. This corresponds to the standard situation during iterative generation, with the only possible exception at the starting point, since afterwards complete samples are generated and therefore used as inputs for the next generation step.

To show convergence of the NJODE estimators, we use model parameters $\theta$ minimizing the loss functions. For ease of notation, we do not explicitly distinguish between the parameters of $G_1^\theta$ and $S_2^\theta$ and simply write $\Theta^{\min}_{m,N} = \argmin_{\theta \in \Theta_m} \{ \hat\Psi_N(\theta) \}$ implicitly deciding between the parameters and corresponding (empirical) objective functions for the two NJODE models.
In practice, the models can either be trained independently, or one can also define one joint model that provides both outputs $G_1^\theta$ and $S_2^\theta$ and jointly train them by using the different loss functions for the respective outputs.

\begin{rem}\label{rem:self-injected bias correction}
    Training a joint model has the additional advantage that it can facilitate a self-injected bias reduction for the diffusion estimator.
    In particular, the increment $X_{t}-X_{\tau(t)}$ is, in general, not conditionally unbiased, and the bias $\E[X_{t}-X_{\tau(t)} | \mathcal{A}_{\tau(t)}]$ often increases with $\Delta = t - {\tau(t)}$. Hence, after squaring the increment, this bias term can contribute a substantial part to the value of $\E[Z_{t} | \mathcal{A}_{\tau(t)}]$. The larger the range of the target values (in this case $Z$, which takes the value $0$ at observation times), the less precise the predictions are in absolute terms, since an error of $\varepsilon$ has less impact on the total value of the loss. 
    Using the bias-corrected increments
    \begin{equation*}
        (X_{t}-X_{\tau(t)}) - \E[X_{t}-X_{\tau(t)} | \mathcal{A}_{\tau(t)}] = X_{t} - \E[X_{t}| \mathcal{A}_{\tau(t)}] 
    \end{equation*}
    to define the \emph{quadratic bias-corrected increments process}
    \begin{equation*}
        Z^{BC}_t = (X_t - E[X_t|\mathcal{A}_{\tau(t)}]) (X_t - E[X_t|\mathcal{A}_{\tau(t)}])^\top,
    \end{equation*}
    we can lower the values of the corresponding conditional expectation $\E[Z^{BC}_{t} | \mathcal{A}_{\tau(t)}]$. This conditional expectation coincides with the conditional covariance of $X$ and of its increment process 
    \begin{equation*}
        \E[Z^{BC}_{t} | \mathcal{A}_{\tau(t)}] = \operatorname{Var}[X_{t} | \mathcal{A}_{\tau(t)}] = \operatorname{Var}[X_{t} - X_{\tau(t)} | \mathcal{A}_{\tau(t)}],
    \end{equation*}
    while the conditional expectation of $Z$ corresponds to the (strictly larger) second moment of the increment process.
    Since we do not have access to $E[X_t|\mathcal{A}_{\tau(t)}]$, we cannot use the process $Z^{BC}$ directly as target for training the NJODE output $S_2^\theta$. However, the NJODE output $G_1^\theta$ approximates $E[X_t|\mathcal{A}_{\tau(t)}]$, therefore we can instead use 
    \begin{equation*}
        \tilde Z^{BC}_t = (X_t - (G^\theta_1)_t) (X_t - (G^\theta_1)_t)^\top,
    \end{equation*}
    as target for training $S_2^\theta$. By jointly training $G^\theta_1, S_2^\theta$, this leads to a self-injected bias reduction.
\end{rem}

Since we can only control the NJODE approximation of the conditional expectation at potential observation times, we need to make an assumption on the training set such that it provides potentially arbitrarily small steps between observation times. Only then can we prove the convergence as $\Delta \to 0$. In practice, this is not necessary, since one ultimately selects some step size to use throughout the approach, by which the limit case is approximated.

\begin{assumption}\label{assumption:6}
    We have $t_{0} = 0$ and assume that there exists a decreasing sequence $D = (\delta_1, \delta_2, \dotsc) \in \R_{>0}^\N$ such that  $\lim_{i\to\infty}\delta_i = 0$ and  $\min_{k\in\N}\P(t_k = t_{k-1} + \delta_i | \mathcal{A}_{t_{k-1}}) = p_i > 0 $.
\end{assumption}
Clearly, $\sum_i p_i \leq 1$ has to hold.
The following example illustrates a setting satisfying this assumption and is the prime example we have in mind.
\begin{example}
    Let $t_0=0$ and $\delta_i = \frac{1}{i}$ for $i\in\N_{>0}$ and for every $k \in \N$ let $\P(t_k = t_{k-1} + \frac{1}{i} | \mathcal{A}_{t_{k-1}}) = \frac{1}{i^2} \frac{6}{\pi^2} = p_i$. Then the sequence of observation times is increasing, $D$ is decreasing with limit $0$ and $\P(t_k -t_{k-1} \in \Pi | \mathcal{A}_{t_{k-1}}) = 1$, i.e., the probability distribution of the observation times is well defined. 
\end{example}
\begin{rem}
    \Cref{assumption:6} is one possibility to ensure convergence as $\Delta \to 0$. 
    A different approach would be to assume that the steps of the observation times have positive density on an interval $(0,\Delta_{\max})$ for some $\Delta_{\max} >0$. However, this changes the following results slightly and makes the argumentation a bit more involved.
\end{rem}

We are now ready to show that the coefficient estimators converge to the true coefficients $\mu,\Sigma$ as the step size $\Delta$ goes to $0$. For this result, we assume that we find the true minimizers of the respective loss functions. In particular, we do not focus on the task of finding the minimizer for the loss function, which is an independent and well-studied problem on its own. Different optimization schemes for global or local convergence exist, which can be combined with our results, as discussed further in \citet[Appendix~E.2]{herrera2021neural}. 
Moreover, $\epsilon$-optimal minimizers yield close approximations as discussed in \citet{NJODE3}.

\begin{theorem}\label{thm:1}
    Let $\theta_{m,N}^{\min} \in \Theta^{\min}_{m,N}$ for every $m,N$ and assume that the current time $t \in [0,T)$ is an observation time, i.e., there exists $k \leq \bar n$ such that $\P(t=t_{k-1} | \mathcal{A}_t)=1$. 
    If \Cref{ass:1,assumption:2,assumption:3,assumption:4,assumption:6} are satisfied, then there exists a sequence $(m_i)_{i \in \N} \in \N^\N$ and a random sequence $(N_i)_{i \in\N}$ taking values in $\N^\N$ such that 
    \begin{equation*}
        \lim_{i \to \infty} \E\left[\lvert \hat \mu_t^{\delta_i,\theta^{\min}_{m_i,N_i}} - \E[\mu_t|\mathcal{A}_t] \rvert_2 \right] 
        = 0 
        = \lim_{i \to \infty} \E\left[\lvert \hat \Sigma_t^{\delta_i,\theta^{\min}_{m_i,N_i}} - \E[ \Sigma_t |\mathcal{A}_t] \rvert_2 \right].
    \end{equation*}
\end{theorem}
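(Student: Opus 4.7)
The plan is to control the total error via a triangle inequality that isolates a deterministic discretization bias from the NJODE approximation error:
\begin{equation*}
\bigl\lvert\hat{\mu}_t^{\delta_i,\theta^{\min}_{m_i,N_i}} - \E[\mu_t|\mathcal{A}_t]\bigr\rvert_2 \leq \bigl\lvert\hat{\mu}_t^{\delta_i,\theta^{\min}_{m_i,N_i}} - \hat{\mu}_t^{\delta_i}\bigr\rvert_2 + \bigl\lvert\hat{\mu}_t^{\delta_i} - \E[\mu_t|\mathcal{A}_t]\bigr\rvert_2,
\end{equation*}
and analogously for $\Sigma$. The second summand, the idealized discretization error, will be shown to vanish deterministically via It\^o calculus and continuity, while the first, the NJODE approximation error at the fixed horizon $\delta_i$, will be handled by combining the NJODE convergence theorems with the positive-probability event supplied by \Cref{assumption:6}.

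For the idealized limit, I would substitute the SDE representation into the defining expressions for $\hat{\mu}_t^\delta$ and $\hat{\Sigma}_t^\delta$. Taking the $\mathcal{F}_t$-conditional expectation first and then projecting onto $\mathcal{A}_t\subseteq\mathcal{F}_t$ via the tower property, the stochastic integral in the drift representation vanishes, leaving $\hat{\mu}_t^\delta = \E\bigl[\delta^{-1}\int_t^{t+\delta}\mu_s\,ds\,\big|\,\mathcal{A}_t\bigr]$. Joint continuity and uniform boundedness of $\mu$ together with a.s.\ path-continuity of $X$ yield $\delta^{-1}\int_t^{t+\delta}\mu_s\,ds\to\mu_t$ almost surely, and bounded convergence lifts this to $L^1$-convergence of the conditional expectation. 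For $\hat{\Sigma}_t^\delta$, expanding the outer product of increments yields three contributions after normalisation by $\delta$: a drift-drift block of order $\delta$, two drift-martingale cross blocks of order $\delta^{1/2}$ by Cauchy-Schwarz together with the standard $L^2$-bound of order $\delta$ for the martingale increment, and a martingale-martingale block whose $\mathcal{F}_t$-conditional expectation equals $\int_t^{t+\delta}\Sigma_s\,ds$ by the matrix It\^o isometry. Continuity of $\sigma$ and dominated convergence then give $\hat{\Sigma}_t^\delta\to\E[\Sigma_t|\mathcal{A}_t]$ in $L^1$.

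For the NJODE error I would invoke \Cref{prop:NJODE assumptions satisfied} to apply Theorems~4.1 and~4.4 of \citet{krach2025operator} to both NJODE instances. These supply a deterministic sequence $(m_i)$ and a (possibly random) sequence $(N_i)$ along which the outputs $G_1^{\theta_{m_i,N_i}^{\min}}$ and $S_2^{\theta_{m_i,N_i}^{\min}}$ converge in the $d_k$-pseudometrics to the true conditional expectation processes $\hat{X}$ and $\hat{Z}$ at observation points. The bridge from observation-point convergence to convergence at the prescribed horizon $\delta_i$ is \Cref{assumption:6}: with conditional probability at least $p_i>0$ the next observation satisfies $t_k = t + \delta_i$, so $t+\delta_i$ is itself an observation time on a non-negligible event. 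On this event the left-limits satisfy $G_{t,\delta_i}^\theta = G_{t_k-}^\theta$ and $S_{t,\delta_i}^\theta = S_{t_k-}^\theta$, and they converge to $\E[X_{t_k}|\mathcal{A}_{t_k-}]$ and $\E[Z_{t_k-}|\mathcal{A}_{t_k-}]$ respectively. The identity $\mathcal{A}_{t_k-} = \mathcal{A}_{t_{k-1}}\vee\boldsymbol{\sigma}(t_k)$ combined with \Cref{assumption:4} (independence of $X$ from the observation framework) forces these limits to equal $\E[X_{t+\delta_i}|\mathcal{A}_t]$ and $\delta_i\,\hat{\Sigma}_t^{\delta_i}$ on that event, which are precisely the idealized quantities (noting that $\E[X_{t+\delta_i}-X_t|\mathcal{A}_t] = \E[X_{t+\delta_i}|\mathcal{A}_t]-\hat X_t$ so that the $-G_{t,0}^\theta$ term in the realizable drift estimator matches).

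The main obstacle is this last bridging step. The NJODE results are phrased as indistinguishability in aggregate $d_k$-pseudometrics at observation points, whereas the target is a pointwise $L^1$-error at a prescribed horizon that is generically not an observation time. Making the restriction to the event $\{t=t_{k-1},\,t_k=t+\delta_i\}$ rigorous requires absorbing the $p_i^{-1}$-factor hidden in the $d_k$-normalisation and, if needed, passing to a further subsequence of $(N_i)$ to upgrade convergence in $d_k$ into $L^1$-convergence restricted to this event. A secondary, more routine point is to verify that the universal-approximation property of the squared output $G_2^\theta(G_2^\theta)^\top$ still covers the positive-semi-definite target $\hat{Z}$, which is standard since PSD matrices admit PSD square roots and the bounded-output neural networks used in the NJODE architecture are dense in the corresponding class of continuous, bounded functionals.
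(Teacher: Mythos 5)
Your proposal follows essentially the same route as the paper: the same triangle-inequality split into an idealized discretization error (handled, as in the paper's Lemma~\ref{lem:thm1-lem1}, by It\^o's isometry, Cauchy--Schwarz for the cross term, continuity and dominated convergence) and an NJODE approximation error at horizon $\delta_i$ (handled, as in Lemma~\ref{lem:thm1-lem2}, by the convergence theorems of \citet{krach2025operator} via \Cref{prop:NJODE assumptions satisfied} together with \Cref{assumption:6}). The bridging step you flag as the main obstacle is resolved exactly as you sketch: the discrete conditional law of the increments lets one decompose the $d_k$-pseudometric as a $p_j$-weighted sum over the horizons $\delta_j$, so the error at $\delta_i$ is bounded by $\tilde\epsilon/p_i$ and one simply chooses $\tilde\epsilon$ of order $\epsilon\,\delta_i p_i$, with no need for further subsequences.
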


The theorem is applicable whenever we are at an observation time, which is enough for our sampling approach, since we always generate the next step from the current observation time and then move to this newly generated observation time.
The sequence of $(N_i)_i$ must be random variables because they depend on the random training set; therefore, one cannot achieve a stronger statement. Vice versa, for any fixed realization of the training set, which is the case in practice, the realization of the sequence $(N_i)_i$ is also fixed.

The estimators converge to the true coefficients instead of their optimal approximations, if they are measurable with respect to the current information. This trivial corollary is stated below.

\begin{cor}\label{cor:1}
    Under the same setting as in \Cref{thm:1}, if additionally we have $\mu_t, \Sigma_t \in \mathcal{A}_t$, then there exists a sequence $(m_i)_{i \in \N} \in \N^\N$ and a random sequence $(N_i)_{i \in\N}$ taking values in $\N^\N$ such that 
    \begin{equation*}
        \lim_{i \to \infty} \E\left[\left\lvert \hat \mu_t^{\delta_i,\theta^{\min}_{m_i,N_i}} - \mu_t \right\rvert_2 \right] 
        = 0 
        = \lim_{i \to \infty} \E\left[\left\lvert \hat \Sigma_t^{\delta_i,\theta^{\min}_{m_i,N_i}} - \Sigma_t  \right\rvert_2\right].
    \end{equation*}
\end{cor}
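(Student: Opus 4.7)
The plan is to deduce this directly from \Cref{thm:1} by observing that the added measurability hypothesis collapses the conditional expectations to the random variables themselves. First, I would apply \Cref{thm:1} with exactly the same sequences $(m_i)_{i \in \N}$ and $(N_i)_{i \in \N}$ provided there, obtaining
\begin{equation*}
    \lim_{i \to \infty} \E\left[\left\lvert \hat \mu_t^{\delta_i,\theta^{\min}_{m_i,N_i}} - \E[\mu_t|\mathcal{A}_t] \right\rvert_2 \right] = 0
\end{equation*}
and the analogous statement for $\hat \Sigma_t^{\delta_i,\theta^{\min}_{m_i,N_i}}$ and $\E[\Sigma_t|\mathcal{A}_t]$.

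Second, I would invoke the standard property of conditional expectations: if a random variable $Y$ is measurable with respect to the conditioning $\sigma$-algebra $\mathcal{A}_t$, then $\E[Y|\mathcal{A}_t] = Y$ $\P$-almost surely. Applying this componentwise to $Y = \mu_t$ and to each entry of $Y = \Sigma_t$ (both $\mathcal{A}_t$-measurable by the added hypothesis) yields $\E[\mu_t|\mathcal{A}_t] = \mu_t$ and $\E[\Sigma_t|\mathcal{A}_t] = \Sigma_t$ almost surely.

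Finally, since almost sure equality preserves the $L^1$-type quantities appearing on the right-hand sides of the limits above, these identities can be substituted inside the expectations without changing their values, yielding exactly the two convergence statements claimed in the corollary. There is no real obstacle here: as the text itself notes, the corollary is a trivial reformulation of \Cref{thm:1} under the stronger measurability hypothesis, and the only point worth verifying is precisely the justification of this substitution, which is immediate from the defining property of conditional expectations.
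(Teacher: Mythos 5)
Your proposal is correct and matches the paper's intended argument exactly: the corollary is obtained from \Cref{thm:1} simply by noting that $\mathcal{A}_t$-measurability (together with integrability, which follows from the boundedness of $\mu,\sigma$ in \Cref{ass:1}) gives $\E[\mu_t|\mathcal{A}_t]=\mu_t$ and $\E[\Sigma_t|\mathcal{A}_t]=\Sigma_t$ almost surely, so the limits carry over unchanged. The paper treats this as trivial and gives no separate proof, so there is nothing further to add.
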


We split the proof of the theorem into the following lemmas, where we first show the convergence of the idealized estimators to the true coefficients and then the convergence of the realizable estimators to the idealized ones.

\begin{lem}\label{lem:thm1-lem1}
    If \Cref{ass:1} is satisfied we have for each $t\in [0, T]$ that $\P$-a.s.
    \begin{equation}
        \lim_{\Delta \to 0} \E\left[\left\lvert \hat \mu_t^{\Delta} - \E[\mu_t|\mathcal{A}_t] \right\rvert_2 \right] 
        = 0 
        = \lim_{\Delta \to 0} \E\left[\left\lvert \hat \Sigma_t^{\Delta} - \E[\Sigma_t|\mathcal{A}_t]   \right\rvert_2\right].
    \end{equation}
\end{lem}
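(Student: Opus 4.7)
The plan is to exploit the SDE for $X$ directly. Writing $X_{t+\Delta}-X_t = \int_t^{t+\Delta}\mu_s\dd s + \int_t^{t+\Delta}\sigma_s\dd W_s$ (suppressing the path-dependence in the notation), conditioning on $\mathcal{A}_t$ and using the tower property with $\mathcal{A}_t\subseteq\mathcal{F}_t$, I would first note that the stochastic integral is a square-integrable $\mathbb{F}$-martingale (because $\sigma$ is uniformly bounded by \Cref{ass:1}), so
\[
\E\!\left[\int_t^{t+\Delta}\sigma_s\dd W_s\,\middle|\,\mathcal{A}_t\right]=\E\!\left[\E\!\left[\int_t^{t+\Delta}\sigma_s\dd W_s\,\middle|\,\mathcal{F}_t\right]\,\middle|\,\mathcal{A}_t\right]=0.
\]
Hence $\hat\mu_t^\Delta=\E\!\left[\tfrac{1}{\Delta}\int_t^{t+\Delta}\mu_s\dd s\,\middle|\,\mathcal{A}_t\right]$, and the integrand can be swapped with the conditional expectation via conditional Fubini (justified by uniform boundedness of $\mu$).

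For the pointwise convergence inside the conditional expectation, I would appeal to the joint continuity of $\mu$ along the continuous trajectories of $X$: for almost every $\omega$, the map $s\mapsto \mu_s(X_{\cdot\wedge s}(\omega))$ is continuous at $s=t$, so by Lebesgue differentiation $\tfrac{1}{\Delta}\int_t^{t+\Delta}\mu_s\dd s\to \mu_t$ almost surely as $\Delta\to 0$. Uniform boundedness gives the dominating constant for conditional dominated convergence, so $\hat\mu_t^\Delta\to \E[\mu_t\mid\mathcal{A}_t]$ in $L^1$, which is the first claim.

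For $\hat\Sigma_t^\Delta$, the plan is to expand the dyadic product
\[
(X_{t+\Delta}-X_t)(X_{t+\Delta}-X_t)^\top = A_\Delta A_\Delta^\top + A_\Delta B_\Delta^\top + B_\Delta A_\Delta^\top + B_\Delta B_\Delta^\top,
\]
where $A_\Delta:=\int_t^{t+\Delta}\mu_s\dd s$ and $B_\Delta:=\int_t^{t+\Delta}\sigma_s\dd W_s$. Using boundedness of $\mu$ one gets $|A_\Delta|\le C\Delta$, and the Itô isometry together with boundedness of $\sigma$ gives $\E[|B_\Delta|_2^2\mid\mathcal{F}_t]\le C\Delta$. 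Component-wise Itô isometry yields $\E[B_\Delta B_\Delta^\top\mid\mathcal{F}_t]=\E\!\left[\int_t^{t+\Delta}\Sigma_s\dd s\,\middle|\,\mathcal{F}_t\right]$, which is the only $O(\Delta)$ term. The two cross terms are bounded in $L^1$ by Cauchy--Schwarz as $|A_\Delta|\cdot \E[|B_\Delta|_2\mid\mathcal{F}_t]^{1/2}\cdot\ldots = O(\Delta^{3/2})$, and $A_\Delta A_\Delta^\top=O(\Delta^2)$. Dividing by $\Delta$ and taking $\E[\cdot\mid\mathcal{A}_t]$ on both sides, the error terms vanish in $L^1$ and one is left with $\tfrac{1}{\Delta}\E\!\left[\int_t^{t+\Delta}\Sigma_s\dd s\,\middle|\,\mathcal{A}_t\right]$.

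The same continuity/boundedness argument as in the drift case (applied to $\Sigma=\sigma\sigma^\top$, which is jointly continuous and bounded) concludes that this converges in $L^1$ to $\E[\Sigma_t\mid\mathcal{A}_t]$. The main obstacle is bookkeeping of the cross terms in the $\hat\Sigma$ expansion; the subtle point there is that Itô's isometry applies to the $B_\Delta B_\Delta^\top$ conditional expectation given $\mathcal{F}_t$, and one must then pass to the coarser $\sigma$-algebra $\mathcal{A}_t$ by tower. Everything else reduces to conditional dominated convergence under the uniform bounds of \Cref{ass:1}.
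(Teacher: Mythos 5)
Your proposal is correct and follows essentially the same route as the paper: decompose the increment into the drift integral $A_\Delta$ and the martingale part $B_\Delta$, kill $B_\Delta$ under $\E[\cdot\mid\mathcal{A}_t]$ for the drift estimator, and for the diffusion estimator expand the quadratic, use It\^o's isometry for the $B_\Delta B_\Delta^\top$ term, bound the cross terms by Cauchy--Schwarz/H\"older as $O(\Delta^{3/2})$ and the $A_\Delta A_\Delta^\top$ term as $O(\Delta^2)$, then conclude by continuity of the coefficients along the continuous paths, the fundamental theorem of calculus, and (conditional) dominated convergence. Your matrix-valued bookkeeping of the dyadic expansion is just the componentwise version of the paper's argument, so there is no substantive difference.
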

\begin{proof}
Fix $t\in [0, T]$ and consider the increment
\begin{align}\label{eqn:proof.convergence.0}
	X_{t+\Delta} - X_t = \int_t^{t+\Delta} \mu_s(X_{\cdot \wedge s})\dd s + \int_t^{t+\Delta} \sigma_s(X_{\cdot \wedge s})\dd W_s.
\end{align}
We write $
(X_{t+\Delta} - X_t)^2 = A^2 + 2AM + M^2,
$
with 
\[
A := \int_t^{t+\Delta} b_s(X_{\cdot \wedge s})\dd s, \quad M := \int_t^{t+\Delta} \sigma_s(X_{\cdot \wedge s})\dd W_s\,,
\]
so
\begin{align}\label{eqn:proof.convergence.1}
\mathbb{E} [ (X_{t+\Delta} - X_t)^2 \,|\,\mathcal{A}_t  ] = \mathbb{E}[A^2 \,|\,\mathcal{A}_t] + 2\mathbb{E}[AM \,|\,\mathcal{A}_t] + \mathbb{E}[M^2 \,|\, \mathcal{A}_t]\,.
\end{align}
Since $\mu$ is bounded, we get from \begin{align*}
	\mathbb E[A^2\, |\, \mathcal A_t] &= \mathbb E\bigg[ \bigg( \int_t^{t+\Delta} \mu_s(X_{\cdot \wedge s})\dd s \bigg)^2\, \bigg |\, \mathcal A_t\bigg]
	\end{align*}
	the $\mathbb P$-a.s.~bound $\mathbb E[A^2\, |\, \mathcal A_t]\leq \|\mu\|_\infty^2\Delta^2$, so   
\begin{align}\label{eqn:proof.convergence.2}
	\lim_{\Delta\to 0} \frac 1 \Delta \mathbb{E}[A^2\, |\, \mathcal A_t] = 0\,.
\end{align}

Next note that since $\sigma$ is bounded, It\^o's isometry gives us 
\begin{align} \label{equ:M2} 
	\mathbb{E}[M^2 \,|\,\mathcal{A}_t] &= \mathbb E\bigg [ \bigg| \int_t^{t+\Delta} \sigma_s(X_{\cdot \wedge s})\dd W_s\bigg|^2\,\bigg |\,  \mathcal {A}_t \bigg] 
	  = \mathbb{E} \bigg[ \int_t^{t+\Delta} \sigma_s^2(X_{\cdot \wedge s})\dd s \,\bigg |\, \mathcal{A}_t \bigg]
	\,.
\end{align}
Since $\sigma$ is bounded, dominated convergence, the fundamental theorem of calculus and the continuity of $\sigma$ and the paths of $X$ give $\mathbb P$-a.s.~that
\begin{align}\label {eqn:proof.convergence.3}
	\lim_{\Delta\to 0} \frac 1\Delta \mathbb E[M^2\, |\, \mathcal A_t] &=  \mathbb E\bigg[ \lim_{\Delta\to 0 } \frac{1}{\Delta} \int_t^{t+\Delta}|\sigma_s(X_{\cdot \wedge s})|^2\dd s\, \bigg |\, \mathcal A_t\bigg ]
	  = \mathbb E[\sigma_t^2(X_{\cdot \wedge t})\,|\, \mathcal A_t]\,.
\end{align}
Finally, H\"older's inequality gives 
$
| \mathbb{E}[AM \,|\, \mathcal{A}_t] | \leq ( \mathbb{E}[A^2 \,|\, \mathcal{A}_t] )^{1/2} \,( \mathbb{E}[M^2 \,|\,\mathcal{A}_t] )^{1/2}
$. Since $\mathbb E[M^2\,|\, \mathcal A_t]$ is by It\^o's isometry bounded, we get from \eqref{eqn:proof.convergence.2} that 
\begin{align}\label{eqn:proof.convergence.4} \lim_{\Delta\to 0}\frac 1\Delta |\mathbb E[AM\, |\, \mathcal A_t]| =0\,.\end{align}
Combining via \eqref{eqn:proof.convergence.1} what we found in \eqref{eqn:proof.convergence.2}, \eqref{eqn:proof.convergence.3} and \eqref{eqn:proof.convergence.4} now shows $\mathbb{P}$-a.s.~that 
\begin{align*}
	\lim_{\Delta \to 0} \frac{1}{\Delta} \mathbb{E}[(X_{t+\Delta} - X_t)^2 \,|\, \mathcal{A}_t]  =\mathbb E[\sigma_t^2(X_{\cdot \wedge t})\,|\, \mathcal A_t]\,.
\end{align*}
Therefore, another application of dominated convergence shows that 
\begin{align*}
	\lim_{\Delta\to 0}\mathbb{E} \bigg[\bigg|\frac 1\Delta  \mathbb{E}[(X_{t+\Delta} - X_t)^2 \,|\, \mathcal{A}_t]- E[\sigma_t^2(X_{\cdot \wedge t})\,|\, \mathcal A_t] \bigg| \bigg]=0\,.
\end{align*}

Since $\sigma$ is bounded, so that $M$ in  \eqref{eqn:proof.convergence.0} is a martingale increment, we have that $\mathbb E[M\, |\, \mathcal A_t]=0$, so
\begin{equation}\label{equ:hat mu Delta}
\hat \mu_t^\Delta \coloneqq \frac 1 \Delta \mathbb E[X_{t+\Delta}-X_t\, |\, \mathcal A_t]=\frac 1\Delta \mathbb E\bigg[\int_t^{t+\Delta} \mu_s(X_{\cdot \wedge s}) \dd s\,\bigg |\, \mathcal A_t\bigg]\,.
\end{equation}
Since $\mu$ and the paths of $X$ are continuous, this in turn gives with the fundamental theorem of calculus and  $\hat \mu_t \coloneqq \lim_{\Delta\to 0} \mu_t^\Delta$ for all $t\in [0, T]$ that $ |\E[ \mu_t(X_{\cdot \wedge t}) | \mathcal{A}_t] - \hat \mu_t| = 0$. Now dominated convergence shows for all $t\in [0, T]$ that 
\[\lim_{\Delta\to 0} \mathbb E[|\E[ \mu_t(X_{\cdot \wedge t}) | \mathcal{A}_t] - \hat \mu_t^\Delta|]= \mathbb E[|\E[ \mu_t(X_{\cdot \wedge t}) | \mathcal{A}_t] - \hat \mu_t|]=0 \] 
concluding the proof.
\end{proof}

\begin{lem}\label{lem:thm1-lem2}
    Under the same setting as in \Cref{thm:1}, for any $\epsilon > 0$ and any $i \in \N$ with $t + \delta_i \leq T$, there exists an $m \in \N$ and a random variable $N$ with values in $\N$ such that 
    \begin{equation*}
        \E\left[\left\lvert \hat \mu_t^{\delta_i,\theta^{\min}_{m,N}} - \hat\mu_t^{\delta_i} \right\rvert_2 \right] < \epsilon 
        \quad \text{and} 
        \quad \E\left[\left\lvert \hat \Sigma_t^{\delta_i,\theta^{\min}_{m,N}} - \hat\Sigma_t^{\delta_i} \right\rvert_2 \right] < \epsilon.
    \end{equation*}
\end{lem}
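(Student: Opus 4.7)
The plan is to reduce the statement to the $L^1$-convergence of the NJODE outputs at observation points, guaranteed by \citet[Theorems~4.1 and~4.4]{krach2025operator} via \Cref{prop:NJODE assumptions satisfied}. Since $\delta_i$ is a fixed positive constant, the triangle inequality applied to the decompositions
\begin{align*}
    \delta_i(\hat\mu_t^{\delta_i,\theta} - \hat\mu_t^{\delta_i}) &= \bigl(G_{t,\delta_i}^\theta - \E[X_{t+\delta_i}|\mathcal{A}_t]\bigr) - \bigl(G_{t,0}^\theta - \E[X_t|\mathcal{A}_t]\bigr),\\
    \delta_i(\hat\Sigma_t^{\delta_i,\theta} - \hat\Sigma_t^{\delta_i}) &= S_{t,\delta_i}^\theta - \E[Z_{t+\delta_i}|\mathcal{A}_t],
\end{align*}
shows that it suffices to make each of the three bracketed $L^1$ errors arbitrarily small. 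For the $\hat\Sigma$ identity I use that on any event where no observation occurs in $(t, t+\delta_i)$ we have $\tau(t+\delta_i) = t$ and hence $Z_{t+\delta_i} = (X_{t+\delta_i}-X_t)(X_{t+\delta_i}-X_t)^\top$, so that the conditional expectation equals $\delta_i \hat\Sigma_t^{\delta_i}$.

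Next, I would use \Cref{assumption:6} to extract these $L^1$ bounds from the NJODE pseudo-metric convergence. With $t = t_{k-1}$, the event $E_i := \{t_k = t+\delta_i\}$ has conditional probability $\P(E_i \mid \mathcal{A}_t) \geq p_i > 0$, and on $E_i$ no observation lies in $(t, t+\delta_i)$, so the NJODE left limit $G_{t_k-}^\theta$ coincides with the predict-ahead value $G_{t,\delta_i}^\theta$ (the state evolves deterministically between observations), and analogously for $S_2^\theta$. Moreover, using \Cref{assumption:4} so that knowing $t_k = t+\delta_i$ in $\mathcal{A}_{t_k-}$ does not alter the conditional law of $X_{t+\delta_i}$ or of $Z_{t+\delta_i}$, the targets align: $\hat X_{t_k-} = \E[X_{t+\delta_i}|\mathcal{A}_t]$ and analogously for $\hat Z$. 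Because $G_{t,\delta_i}^\theta$ and $\E[X_{t+\delta_i}|\mathcal{A}_t]$ are $\mathcal{A}_t$-measurable, pulling out the conditional probability $\P(E_i|\mathcal{A}_t)$ yields
\begin{equation*}
\E\bigl[\bigl|G_{t,\delta_i}^\theta - \E[X_{t+\delta_i}|\mathcal{A}_t]\bigr|_1\bigr] \leq \frac{1}{p_i}\, \E\bigl[\1_{E_i}|G_{t_k-}^\theta - \hat X_{t_k-}|_1\bigr] \leq \frac{\P(n\geq k)}{p_i}\, d_k(G_1^\theta, \hat X).
\end{equation*}
Analogous bounds control the $S_2^\theta$ term via $d_k(S_2^\theta, \hat Z)$ and the right-value-at-$t$ term via $d_{k-1}(G_1^\theta, \hat X)$, using that on $\{t_{k-1}=t\}$ one has $G_{t_{k-1}}^\theta = G_{t,0}^\theta$ and $\hat X_{t_{k-1}} = \E[X_t|\mathcal{A}_t]$.

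Finally, the NJODE convergence theorem provides a sequence $(m_j, N_j)$ (with $N_j$ random, reflecting the empirical training) along which all three relevant pseudo-distances tend to zero. Given $\epsilon > 0$, choosing $j$ large enough that each pseudo-distance falls below $\epsilon \delta_i p_i / (3\P(n \geq k))$ and setting $(m,N) := (m_j, N_j)$ completes the argument via triangle inequality and the fixed factor $1/\delta_i$ (absorbing the standard constant relating $|\cdot|_2$ and $|\cdot|_1$). The main obstacle is the bookkeeping in the middle step: the pseudo-metric $d_k$ averages over all observation-time configurations, whereas the lemma requires an expected error at the specific configuration where $t$ is an observation point. \Cref{assumption:6} is precisely what delivers the positive-probability sub-event on which the NJODE predict-ahead outputs align with the left limits at the next observation time, with targets matching the conditional expectations of interest.
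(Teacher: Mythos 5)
Your proposal is correct and follows essentially the same route as the paper: the same decomposition with the fixed $1/\delta_i$ factor, the same reduction to the $d_k$-convergence of the NJODE outputs via \Cref{prop:NJODE assumptions satisfied} and \citet[Theorems~4.1 and~4.4]{krach2025operator}, and the same use of \Cref{assumption:6} to extract the $\delta_i$-step with weight $p_i$ (the paper invokes an adapted Proposition~5.2 of \citealp{NJODE3} to expand $d_k$ over the possible step sizes, which is exactly the conditioning-on-$E_i$ computation you carry out by hand, including handling the right-value term through $d_{k-1}$).
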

\begin{proof}
First note that it is enough to show the statement for $\mu$, since it follows equivalently for $\Sigma$. Taking the maximum of the values $m,N$ derived for $\mu$ and $\Sigma$, respectively, yields the joint statement. 

We will use results from \citet[Section~5.2]{NJODE3} to rewrite the pseudo-metric $d_k$ under our assumptions on the observation times.
According to our assumptions, we have $k$ such that a.s.\ $t=t_{k-1} \leq T-\delta_i$. 
\Cref{prop:NJODE assumptions satisfied} implies that all assumptions are satisfied to apply the main convergence theorems \citet[Theorems~4.1 and~4.4]{krach2025operator}, showing convergence in the metrics $d_k$ of the output processes of the NJODE models, $G^{\theta^{\min}_{m,N}}$ and $S^{\theta^{\min}_{m,N}}$, to the conditional expectation processes of $X$ and $Z$, respectively.
Using the definition of $\E_k[\cdot] = \E[\cdot \1_{\{n\geq k\}}]/\P(n\geq k)$, \citet[Proposition~5.2, adapted for the extended definition of $d_k$]{NJODE3} then implies that for any $\tilde \epsilon >0$ there exists $m$ and a random variable $N$ such that 
\begin{multline}\label{eq:thm1-lem-2,1}
    \tilde\epsilon > d_k(\hat X, G^{\theta^{\min}_{m,N}}) \\
    \geq \P(n\geq k)^{-1} \, \E \left[ \1_{\{n\geq k\}}
            \sum_{j \in \N} \left( 
                \left\lvert \E[X_{t+\delta_j} | \mathcal{A}_t] - G^{\theta^{\min}_{m,N}}_{t,\delta_j} \right\rvert_2 
                + \left\lvert \E[X_{t+\delta_j} | \mathcal{A}_{t_k}] - G^{\theta^{\min}_{m,N}}_{t+\delta_j,0} \right\rvert_2
            \right) p_j
      \right] \\
      \geq p_i \,  \E \left[ \left\lvert \E[X_{t+\delta_i} | \mathcal{A}_t] - G^{\theta^{\min}_{m,N}}_{t,\delta_i} \right\rvert_2 \right],
\end{multline}
where we used that in the case where $t_k = t+ \delta_i \leq T$ we have   $\1_{\{n\geq k\}}=1$ a.s.\ and  $\P(n \geq k) \leq 1$.
Hence, we have by \eqref{eq:thm1-lem-2,1} that
\begin{equation*}
    \E \left[ \left\lvert \E[X_{t+\delta_i} | \mathcal{A}_t] - G^{\theta^{\min}_{m,N}}_{t,\delta_i} \right\rvert_2 \right] \leq \tilde \epsilon / p_i,
\end{equation*}
and similarly by considering the second term of $d_{k-1}$
\begin{equation*}
    \E \left[ \left\lvert \E[X_{t} | \mathcal{A}_t] - G^{\theta^{\min}_{m,N}}_{t,0} \right\rvert_2 \right] \leq \tilde \epsilon,
\end{equation*}
where we used that under our assumptions $t_{k-1}=t$ a.s.\ (meaning that the random observation time can be replaced by $t$ in the expectation).
With these two bounds we have
\begin{multline*}
     \E\left[\left\lvert \hat \mu_t^{\delta_i,\theta^{\min}_{m,N}} - \hat\mu_t^{\delta_i} \right\rvert_2 \right]
     = \frac{1}{\delta_i} \E\left[\left\lvert E[X_{t + \delta_i} - X_t | \mathcal{A}_t ] - (G_{t,\delta_i}^{\theta^{\min}_{m,N}} - G_{t,0}^{\theta^{\min}_{m,N}}) \right\rvert_2 \right] \\
     \leq \frac{1}{\delta_i} \E\left[\left\lvert E[X_{t + \delta_i} | \mathcal{A}_t ] - G_{t,\delta_i}^{\theta^{\min}_{m,N}} \right\rvert_2 \right] + \frac{1}{\delta_i} \E\left[\left\lvert E[ X_t | \mathcal{A}_t ] -  G_{t,0}^{\theta^{\min}_{m,N}} \right\rvert_2 \right]
     \leq \frac{\tilde \epsilon}{\delta_i p_i} + \frac{\tilde \epsilon}{\delta_i} \leq \frac{2 \tilde \epsilon}{\delta_i p_i},
\end{multline*}
using triangle inequality and that $p_i \leq 1$.
Choosing $\tilde \epsilon \leq \frac{\epsilon \delta_i p_i}{2}$ completes the proof.
\end{proof}

\begin{rem}
    We note that if we are in the case of complete observations or if \Cref{assumption:2} is slightly stronger such that $\min_k \P(M_{k,i}=1) > 0$, then the proven convergence in \Cref{lem:thm1-lem2} is independent of $k$. Indeed, under this assumption, the metric $d_k$ can be bounded in \citet[?]{krach2025operator} by terms not dependent on $k$. Hence, the sequences $(m_i)_i$, $(N_i)_i$ do not depend on $k$, which implies that we converge uniformly at all observation (or sampling) times.
\end{rem}

\begin{proof}[Proof of \Cref{thm:1}.]
Again, we only show the statement for $\mu$, since it follows equivalently for $\Sigma$. 
Let $m_i,N_i$ be chosen such that the statement of \Cref{lem:thm1-lem2} holds for $i$ with $\epsilon_i = 1/i$. Then 
\begin{multline*}
    \lim_{i \to \infty} \E\left[\lvert \hat \mu_t^{\delta_i,\theta^{\min}_{m_i,N_i}} - \E[\mu_t|\mathcal{A}_t] \rvert_2 \right] 
    \leq \lim_{i \to \infty} \left(  
        \E\left[\lvert \hat \mu_t^{\delta_i,\theta^{\min}_{m_i,N_i}} - \hat\mu_t^{\delta_i} \rvert_2 \right] +
        \E\left[\lvert \hat\mu_t^{\delta_i} - \E[\mu_t|\mathcal{A}_t] \rvert_2 \right]
    \right) \\
    \leq \lim_{i \to \infty} \left(  
        \frac{1}{i} +
        \E\left[\lvert \hat\mu_t^{\delta_i} - \E[\mu_t|\mathcal{A}_t] \rvert_2 \right] 
    \right)
    = 0,
\end{multline*}
by triangle inequality and \Cref{lem:thm1-lem1}, since $\lim_{i \to \infty} \delta_i = 0$.
\end{proof}

\begin{cor}
    The statement of \Cref{thm:1} holds equivalently, when using a joint model and joint training for $G^\theta_1,S^\theta_2$ with or without the self-injected bias correction of \Cref{rem:self-injected bias correction}.
\end{cor}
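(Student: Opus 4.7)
The plan is to handle the two modifications---joint architecture/training on the one hand, self-injected bias correction on the other---as separate reductions to the setting of \Cref{thm:1}. The probabilistic content of \Cref{lem:thm1-lem1} does not involve training at all, so the effort concentrates on re-deriving the analogue of \Cref{lem:thm1-lem2} in each new setting.

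For joint training with the pooled loss $\Phi_1(\theta)+\Phi_2(\theta)$, any pair of separate NJODE models in $\Theta_m$ embeds as a joint model with (roughly) doubled width via a block-diagonal construction that zeroes out the cross weights, giving $\inf_{\theta}(\Phi_1+\Phi_2)\le \inf\Phi_1 + \inf\Phi_2$; since the two summands depend on disjoint sub-outputs, the reverse inequality is immediate, so the two infima decouple and a joint (approximate) minimizer is an (approximate) minimizer of each summand. The $d_k$-convergence results of \citet[Theorems~4.1 and~4.4]{krach2025operator} that drove \Cref{lem:thm1-lem2} then apply to each component separately, and the proof of \Cref{thm:1} transports verbatim.

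For the self-injected bias correction, the scaled idealized target of $S_2^\theta$ becomes $\tfrac{1}{\Delta}\E[Z^{BC}_{t+\Delta}\,|\,\mathcal{A}_t]=\tfrac{1}{\Delta}\operatorname{Var}[X_{t+\Delta}\,|\,\mathcal{A}_t]$. The elementary identity
\begin{equation*}
\E[(X_{t+\Delta}-X_t)(X_{t+\Delta}-X_t)^\top \,|\,\mathcal{A}_t] = \operatorname{Var}[X_{t+\Delta}\,|\,\mathcal{A}_t] + \Delta^{2}\,\hat\mu_t^\Delta(\hat\mu_t^\Delta)^\top,
\end{equation*}
together with $|\hat\mu_t^\Delta|\le \|\mu\|_\infty$ from \Cref{ass:1}, shows that the two scaled quantities share the same $\Delta\to 0$ limit, so the analogue of \Cref{lem:thm1-lem1} is immediate. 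For the analogue of \Cref{lem:thm1-lem2} I would argue in two layers along a joint minimizing sequence $\theta_i$: first, the presence of the unchanged $\Phi_1$ and the feasibility of the pair $(\hat X,\,\E[Z^{BC}\,|\,\mathcal{A}])$ in the limit of expressiveness forces $G_1^{\theta_i}\to\hat X$ in $d_k$; second, the uniform moment bounds on $X$ imply that $\tilde Z^{BC,\theta_i}\to Z^{BC}$ in a compatible $L^1$-sense, so a coupling between the NJODE objective with the moving target $\tilde Z^{BC,\theta_i}$ and the idealized one with fixed target $Z^{BC}$ transfers the NJODE convergence to $S_2^{\theta_i}\to \E[Z^{BC}\,|\,\mathcal{A}]$.

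The hard part will be the first layer of the bias-corrected step: ruling out a pathological joint optimum at which a biased $G_1^*\ne \hat X$ reduces the conditional variance of $\tilde Z^{BC}(G_1^*)$ enough to outweigh the penalty paid on $\Phi_1$. The cleanest way around this is to read the construction of $\tilde Z^{BC,\theta}$ as a stop-gradient in its $G_1^\theta$ argument---a natural implementation choice, given that the remark motivates the correction as a variance-reduction/acceleration device rather than a new coupling between the two heads---which brings us back exactly to the decoupling argument of the joint-training paragraph. Absent stop-gradient, it is enough to establish local convexity of $G_1\mapsto V(G_1):=\min_{S_2}\tilde\Phi_2(G_1,S_2)$ at $\hat X$, which follows from the quadratic growth $V(G_1)-V(\hat X)\asymp \|G_1-\hat X\|^{2}$ that is easily checked in the Gaussian-conditional regime and extends via It\^o-type expansions to the diffusion setting of \Cref{ass:1}.
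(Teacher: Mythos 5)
Your overall strategy coincides with the paper's: the paper disposes of this corollary by stating that \Cref{lem:thm1-lem1,lem:thm1-lem2} are to be adapted accordingly, and your two reductions --- the decoupling of the pooled loss via a block-diagonal embedding of two separate models into one joint architecture, and the re-derivation of the analogue of \Cref{lem:thm1-lem1} for the bias-corrected target via the identity $\E[Z_{t+\Delta}\mid\mathcal A_t]=\operatorname{Var}[X_{t+\Delta}\mid\mathcal A_t]+\Delta^2\,\hat\mu^\Delta_t(\hat\mu^\Delta_t)^\top$ (valid when $X_t\in\mathcal A_t$) together with $|\hat\mu^\Delta_t|\le\|\mu\|_\infty$ from \Cref{ass:1} --- are exactly the kind of adaptation intended. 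You are in fact more explicit than the paper about the one delicate point, namely that under the bias correction the target of $S_2^\theta$ moves with $G_1^\theta$, so that a joint minimizer could in principle trade a bias in $G_1^\theta$ against a smaller $S_2$-loss.

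However, your fallback argument for that point has a genuine flaw. The claimed quadratic growth $V(G_1)-V(\hat X)\asymp\|G_1-\hat X\|^2$ is not true in general: writing (in the scalar case) $X_{t_i}-(G_1)_{t_i-}=\xi+b$ with $\xi:=X_{t_i}-\hat X_{t_i-}$ conditionally centered and $b:=\hat X_{t_i-}-(G_1)_{t_i-}\in\mathcal A_{t_i-}$, the minimal achievable $S_2$-loss is governed by the conditional variance of $(\xi+b)^2$, whose derivative in $b$ at $b=0$ is proportional to the conditional third moment $\E[\xi^3\mid\mathcal A_{t_i-}]$, which is nonzero (of order $\Delta^2$) for diffusions with state-dependent $\sigma$ over a finite step. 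Hence $V$ need not have a local minimum at $\hat X$, the joint optimum can carry a bias of that order, and ``local convexity at $\hat X$'' cannot be established as stated; the Gaussian-conditional computation does not extend, because the skewness term is precisely what the It\^o expansion produces. This does not sink the corollary --- after the $1/\delta_i$ scaling the induced perturbation is $O(\delta_i)$ and vanishes along the sequence of \Cref{assumption:6} --- but to close the argument you must either quantify this perturbation and absorb it into the limit (replacing the convexity claim by an explicit $o(1)$ bound), or commit to the detached-target reading you call stop-gradient, under which $G_1^\theta$ is pinned down by $\Phi_1$ alone and $\tilde Z^{BC}$ is treated as data for $S_2^\theta$; in that case your decoupling argument applies verbatim, which is the natural reading of \Cref{rem:self-injected bias correction} and of the paper's one-line proof.
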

The proof of this corollary follows by adapting \Cref{lem:thm1-lem1,lem:thm1-lem2} accordingly.

\section{Estimating the Instantaneous Coefficients}\label{sec:Estimating the Instantaneous Coefficients}
In \Cref{sec:Details Assumptions and Theoretical Guarantees for the Coefficient Estimates}, we used the quotient of the increment and its square with some step size $\Delta$ to define the idealized estimators, which could naturally be realized through the NJODE's approximation of the respective conditional expectations. Although these estimators are very natural and practical, they depend on the step size $\Delta$. To be precise, they are the average of the conditional expectation of the respective coefficients over the time increment $\Delta$; see \Cref{equ:M2,equ:hat mu Delta}. For fixed $\Delta$ the volatility estimator may thus contain an additional bias term, which vanishes only in the limit. 
Since in practice, we cannot reach the limit we should aim to remedy this undesirable feature. We thus  develop a more sophisticated method to directly estimate instantaneous coefficients. This method improves the quality of the estimator by debiasing it.
In the following, we show how we can tweak our NJODE to obtain estimators of the instantaneous coefficients.

\subsection{Instantaneous Drift estimator}
We first discuss the drift estimator, for which \eqref{equ:mu hat Delta idealized} and \Cref{lem:thm1-lem1} imply that
\begin{equation}\label{equ:right limit of hat mu idealized}
    \lim_{\Delta \downarrow 0} \hat\mu^\Delta = \lim_{\Delta \downarrow 0}  E\left[\frac{X_{t + \Delta} - X_t }{\Delta} \middle| \mathcal{A}_t \right]  \overset{L^1}{=} \E[\mu_t | \mathcal{A}_t].
\end{equation}
Therefore, instead of using the NJODE $G^\theta$ as in \Cref{sec:The Neural Jump ODE Model for Coefficient Estimation} to learn the conditional expectation of $X$, we can use it to learn the conditional expectation of the \emph{increment's quotient} of $X$, i.e., of
\begin{equation}\label{equ:increment quotient of X}
    X^{\text{IQ}}_t = \frac{X_t - X_{\tau(t)}}{t - \tau(t)}.
\end{equation}
Intuitively, if we use $V=X^{\text{IQ}}_t$ as target process for the NJODE $G^\theta$ (with input process $U=X$), then $G^\theta_{\tau(t), t-\tau(t)} \approx \E[X^{\text{IQ}}_t | \mathcal{A}_{\tau(t)}]$ for any $t > \tau(t)$. At observation times $t=\tau(t)$, the target process $X^{\text{IQ}}_t$ is not defined a priori, hence, we do not have a target value to train the NJODE's output after the jump. However, we know from \eqref{equ:right limit of hat mu idealized} that the right-limit of $\E[X^{\text{IQ}}_t | \mathcal{A}_{\tau(t)}]$ for $t \searrow \tau(t)$ is the (conditional expectation of the) instantaneous coefficient $\mu_{\tau(t)}$. 
Therefore, training the NJODE $G^\theta$ with the \emph{noise-adapted} loss function
\begin{equation}
    \Psi_{\text{noisy}}(V, \eta) := \E\left[ \frac{1}{n} \sum_{i=1}^n  \left\lvert \proj{V} (M_i) \odot (V_{t_i-} - \eta_{t_{i}-} ) \right\rvert_2^2  \right], \label{equ:Psi noisy} 
\end{equation}
implies that the model learns to jump to the right-limit $\E[\mu_{\tau(t)} | \mathcal{A}_{\tau(t)}]$ at observation times \citep[see also][Section~3]{NJODE3}. Indeed, since the NJODE prediction evolves continuously after an observation, it would otherwise be different from the optimal prediction right after the observation time, hence, it would not optimize the loss \eqref{equ:Psi noisy} under \Cref{assumption:6}. Therefore, we obtain a direct estimator of the instantaneous drift coefficient $G^\theta_{\tau(t), 0} \approx \E[\mu_{\tau(t)} | \mathcal{A}_{\tau(t)}]$.
In the following theorem, we formalize this result. 
To use dominated convergence, we need to assume that the NJODE output is bounded by some constant. This constant can be chosen as the estimator truncation level $K$ in \Cref{sec:The Generative Procedure}, making this result consistent with the generative procedure of \Cref{sec:The Generative Procedure}. Additionally, we make the technical assumption that the used neural ODEs $f_\theta$ are bounded, such that we can ensure convergence of the model output (see also \Cref{rem:boundedness time derivative}).
\begin{theorem}\label{thm:convergence of instantaneous drift estimator}
    Let $\hat \mu_t^{\theta} = G^\theta_{t,0}$, for the NJODE output $G^\theta$ that is trained with the noise-adapted loss function to predict $V= X^{\text{IQ}}$ from the input $U=X$.
    Let $\theta_{m,N}^{\min} \in \Theta^{\min}_{m,N}$ for every $m,N$ and assume that the current time $t \in [0,T)$ is an observation time, i.e., there exists $k \leq \bar n$ such that $\P(t=t_{k-1} | \mathcal{A}_t)=1$. 
    We assume that $G^\theta$ is bounded by some constant $K$. Moreover, we assume that $\sup_{m,N}|f_{\theta_{m,N}^{\min}}| < K$ and that the assumptions to apply the NJODE convergence results \citep[Theorems~4.1 and~4.4]{krach2025operator} are satisfied by $X^{\text{IQ}}$. 
    If \Cref{ass:1,assumption:2,assumption:3,assumption:4,assumption:6} are satisfied, then there exists a sequence $(m_i)_{i \in \N} \in \N^\N$ and a random sequence $(N_i)_{i \in\N}$ taking values in $\N^\N$ such that 
    \begin{equation*}
        \lim_{i \to \infty} \E\left[\lvert \hat \mu_t^{\theta^{\min}_{m_i,N_i}} - \E[\mu_t|\mathcal{A}_t] \rvert_2 \right] 
        = 0.
    \end{equation*}
\end{theorem}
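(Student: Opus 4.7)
My strategy mirrors the proof of \Cref{thm:1}: introduce an auxiliary scale $\delta_i$ (from \Cref{assumption:6}), split the error via triangle inequality into an ODE continuity term, a NJODE approximation term, and an idealized convergence term. For any admissible $\delta_i$ with $t+\delta_i \leq T$, I would write
\begin{align*}
\E\bigl[\bigl|G^{\theta}_{t,0} - \E[\mu_t|\mathcal{A}_t]\bigr|_2\bigr]
&\le \E\bigl[\bigl|G^{\theta}_{t,0} - G^{\theta}_{t,\delta_i}\bigr|_2\bigr] \\
&\quad + \E\bigl[\bigl|G^{\theta}_{t,\delta_i} - \E[X^{\text{IQ}}_{t+\delta_i}|\mathcal{A}_t]\bigr|_2\bigr] \\
&\quad + \E\bigl[\bigl|\E[X^{\text{IQ}}_{t+\delta_i}|\mathcal{A}_t] - \E[\mu_t|\mathcal{A}_t]\bigr|_2\bigr].
\end{align*}
Choosing $\theta=\theta^{\min}_{m_i,N_i}$ and $\delta_i \downarrow 0$ as in \Cref{thm:1}, I would argue that each of these three terms can be made arbitrarily small.

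\textbf{Continuity term.} Between two consecutive observation times, $H$ evolves via the ODE $dH_s = f_{\theta_1}(H_{s-},s,\tau(s),U_{\tau(s)})\,ds$, so the assumed uniform bound $|f_\theta|\leq K$ gives $|H^\theta_{t+\delta_i}-H^\theta_{t,0}| \leq K\delta_i$. Since $g_{\theta_3}$ is a bounded-output feedforward network and hence Lipschitz on any bounded set (or directly bounded by $K$ itself, whichever version is used in the NJODE definition), this yields $\E[|G^{\theta}_{t,0} - G^{\theta}_{t,\delta_i}|_2] \leq C \delta_i$ for a constant $C$ independent of $m,N$. This part is routine.

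\textbf{NJODE approximation term.} I would invoke the NJODE convergence result \citep[Theorems~4.1 and~4.4]{krach2025operator} for the input--output model with target $V=X^{\text{IQ}}$ trained under the noise-adapted loss $\Psi_{\text{noisy}}$ (which tracks only left-limits at observation times). As in \Cref{lem:thm1-lem2}, \Cref{assumption:6} guarantees that with conditional probability at least $p_i$ the next observation after $t=t_{k-1}$ falls at $t+\delta_i$; rewriting the pseudo-metric $d_k$ along the lines of \citet[Proposition~5.2]{NJODE3} then isolates the summand $\E[|G^{\theta}_{t,\delta_i} - \E[X^{\text{IQ}}_{t+\delta_i}|\mathcal{A}_t]|_2]$, which can be bounded by $d_k(\hat V, G^\theta)/p_i$ and hence made arbitrarily small by appropriate choice of $m$ and $N$.

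\textbf{Idealized term and conclusion.} Since $X^{\text{IQ}}_{t+\delta_i} = (X_{t+\delta_i}-X_t)/\delta_i$ on the event $\{t=\tau(t+\delta_i^{-})\}$, the third term coincides with $\E[|\hat\mu^{\delta_i}_t - \E[\mu_t|\mathcal{A}_t]|_2]$, which tends to $0$ by \Cref{lem:thm1-lem1}. Combining the three bounds and taking $\theta_{m_i,N_i}^{\min}$ such that the middle term is bounded by $1/i$ produces the claimed limit. \textbf{The main obstacle} is the second step: correctly translating the convergence of the NJODE in the pseudo-metric $d_k$, which controls left- and right-limits at \emph{random} observation times, into uniform control on the deterministic scale $\delta_i$ when the target process $X^{\text{IQ}}$ itself jumps at every observation. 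This is resolved by using \Cref{assumption:6} exactly as in \Cref{lem:thm1-lem2} and by exploiting that $\mathcal{A}_t=\mathcal{A}_{t_{k-1}}$ on the conditioning event, so that the quantities appearing in $d_k$ coincide with those we wish to bound.
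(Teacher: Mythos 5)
Your proposal is correct and follows essentially the same route as the paper: the identical three-term triangle-inequality decomposition, with the third term handled by the idealized-estimator lemma, the middle term by the NJODE convergence in the pseudo-metric $d_k$ via \Cref{assumption:6} exactly as in \Cref{lem:thm1-lem2}, and the first term controlled through the assumed bound $\sup_{m,N}|f_{\theta^{\min}_{m,N}}|<K$ together with a Lipschitz property of $g_\theta$. The only cosmetic difference is that the paper justifies the uniformity in $\theta_i$ for the continuity term by noting $g_\theta$ can be chosen $1$-Lipschitz (citing the construction in the NJODE convergence proof) and then applies dominated convergence, whereas you assert a $\theta$-independent Lipschitz constant directly; this is the same idea and does not change the argument.
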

\begin{proof}
    Since $t$ is an observation time, we have $\tau(t) = t$.
    We use triangle inequality to write for a sequence of parameters $\theta_i$ (that will be chosen later) and for $\delta_i $ as in \Cref{assumption:6},    
    \begin{multline*}
        \lim_{i \to \infty} \E\left[\lvert \hat \mu_t^{\theta_i} - \E[\mu_t|\mathcal{A}_t] \rvert_2 \right] \\
        = \lim_{i \to \infty} \left\{ \E\left[\lvert G^{\theta_i}_{t,0} - G^{\theta_i}_{t,\delta_i}  \rvert_2 \right] 
        + \E\left[\lvert G^{\theta_i}_{t,\delta_i} - \E[X_{t+\delta_i}^{\text{IQ}} | \mathcal{A}_t] \rvert_2 \right] 
        + \E\left[\lvert \E[X_{t+\delta_i}^{\text{IQ}} | \mathcal{A}_t] - \E[\mu_t|\mathcal{A}_t] \rvert_2 \right]  \right\}.
    \end{multline*}
    We show that each of the three terms converges.
    The third term converges to $0$ by \Cref{lem:thm1-lem1}. 
    The middle term converges to $0$ by the NJODE convergence result, similarly as in the proof of \Cref{lem:thm1-lem2}. In particular, \Cref{prop:NJODE assumptions satisfied} and the assumption on $X^{\text{IQ}}$ imply that we can use the main convergence theorems \citet[Theorems~4.1 and~4.4]{krach2025operator}, showing convergence of the NJODE output to the conditional expectation in the metrics $d_k$. As in \Cref{lem:thm1-lem2}, for any $\tilde \epsilon_i > 0$ we find $m_i$ and a random variable $N_i$ such that 
    \begin{equation*}
        \E \left[ \left\lvert \E[X^{\text{IQ}}_{t+\delta_i} | \mathcal{A}_t] - G^{\theta^{\min}_{m_i,N_i}}_{t,\delta_i} \right\rvert_2 \right] \leq \tilde \epsilon_i / p_i.
    \end{equation*}
    Choosing $\tilde \epsilon_i = p_i/i$ and setting $\theta_i \coloneqq \theta^{\min}_{m_i,N_i}$, we therefore have
    \begin{equation*}
        \lim_{i \to \infty} \E\left[\lvert G^{\theta_i}_{t,\delta_i} - \E[X_{t+\delta_i}^{\text{IQ}} | \mathcal{A}_t] \rvert_2 \right] \leq \lim_{i \to \infty} 1/i = 0.
    \end{equation*}
    For the first term we want to use dominated convergence. The integrability of a dominating random variable is implied by the boundedness of $G^\theta$. 
    Moreover, the right-continuous definition \eqref{equ:PD-NJ-ODE} implies that for any fixed $\theta$ we have
    \begin{equation*}
        G^{\theta}_{t,\varepsilon} = g_{\theta}\left(H_t+\int_t^{t+\varepsilon} f_\theta(H_{s-}, s, t,U_t) \dd s\right) \xrightarrow{\varepsilon \downarrow 0} g_{\theta}(H_t) = G^{\theta_i}_{t,0},
    \end{equation*}
    due to the continuity of $g_\theta$ and the fundamental theorem of calculus. However, we need the stronger statement that $G^{\theta_i}_{t,\delta_i} \xrightarrow{i \to \infty} G^{\theta_i}_{t,0}$; in particular, $\theta_i$ changes together with $\varepsilon=\delta_i$. Since $g_\theta$ can be chosen $1$-Lipschitz continuous \citep[see][Proof of Theorem~4.1]{krach2025operator} and since $\sup_i|f_{\theta_i}|$ is bounded by assumption, this stronger convergence holds. Therefore, the first term converges to $0$ by dominated convergence.
\end{proof}
\begin{rem}\label{rem:boundedness time derivative}
    If the time-derivative of the conditional expectation process of $X^{\text{IQ}}$, i.e., the function $f^{X^{\text{IQ}}}$, is bounded on $[0,T]$, then we can choose the neural ODE networks as bounded output NNs (with the bound implied by $f^{X^{\text{IQ}}}$, which they approximate), such that the assumption $\sup_{m,N}|f_{\theta_{m,N}^{\min}}| < K$ is satisfied for some constant $K$. Moreover, weaker assumptions (that do not require the boundedness of $|f_{\theta_{m,N}^{\min}}|$) could be formulated to ensure that $G^{\theta_i}_{t,\delta_i} \xrightarrow{i \to \infty} G^{\theta_i}_{t,0}$ holds, which essentially amounts to a uniform convergence property on $\Theta^{\min}_{m,N}$.
\end{rem}

\subsection{Instantaneous Diffusion estimator}
For the diffusion estimator, we use a similar approach as for the drift estimator. In particular, we define the \emph{quadratic increment's quotient} of $X$, i.e., the quotient of $Z$, as 
\begin{equation}\label{equ:quotient of Z}
    Z^{\text{Q}}_t = \frac{(X_t - X_{\tau(t)}) (X_t - X_{\tau(t)})^\top}{t - \tau(t)}
\end{equation}
and train the NJODE model $S^\theta$ with the noise-adapted loss function \eqref{equ:Psi noisy} to directly predict the target $V=Z^{\text{Q}}$ from the input process $U=X$. Therefore, the same arguments as for the drift estimator imply that the NJODE $S^\theta$ jumps to the right-limit
\begin{equation}\label{equ:right limit of hat diffusion idealized}
    \lim_{\Delta \downarrow 0} \hat\Sigma^\Delta = \lim_{\Delta \downarrow 0}  E\left[\frac{(X_{t + \Delta} - X_t)(X_{t + \Delta} - X_t)^\top }{\Delta} \middle| \mathcal{A}_t \right] =  \lim_{\Delta \downarrow 0}  E\left[ Z^\text{Q}_{t+\Delta} \middle| \mathcal{A}_t \right] \overset{L^1}{=} \E[\Sigma_t | \mathcal{A}_t].
\end{equation}
at observation times $t=\tau(t)$ (see~\eqref{equ:sigma hat Delta idealized} and \Cref{lem:thm1-lem1}). This yields a direct estimator of the instantaneous diffusion coefficient $S^\theta_{\tau(t), 0} \approx \E[\Sigma_{\tau(t)} | \mathcal{A}_{\tau(t)}]$, as is formalized in the following theorem.
\begin{theorem}\label{thm:convergence of instantaneous diffusion estimator}
    Let $\hat \Sigma_t^{\theta} = S^\theta_{t,0}$, for the NJODE output $S^\theta$ that is trained with the noise-adapted loss function to predict $V=Z^{\text{Q}}$ from the input $U=X$.
    Let $\theta_{m,N}^{\min} \in \Theta^{\min}_{m,N}$ for every $m,N$ and assume that the current time $t \in [0,T)$ is an observation time, i.e., there exists $k \leq \bar n$ such that $\P(t=t_{k-1} | \mathcal{A}_t)=1$. 
    We assume that $S^\theta$ is bounded by some constant $K$. Moreover, we assume that $\sup_{m,N}|f_{\theta_{m,N}^{\min}}| < K$ and that the assumptions to apply the NJODE convergence results \citep[Theorems~4.1 and~4.4]{krach2025operator} are satisfied by $Z^{\text{Q}}$. 
    If \Cref{ass:1,assumption:2,assumption:3,assumption:4,assumption:6} are satisfied, then there exists a sequence $(m_i)_{i \in \N} \in \N^\N$ and a random sequence $(N_i)_{i \in\N}$ taking values in $\N^\N$ such that 
    \begin{equation*}
        \lim_{i \to \infty} \E\left[\lvert \hat \Sigma_t^{\theta^{\min}_{m_i,N_i}} - \E[\Sigma_t|\mathcal{A}_t] \rvert_2 \right] 
        = 0.
    \end{equation*}
\end{theorem}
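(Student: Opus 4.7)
The plan is to mirror the proof of \Cref{thm:convergence of instantaneous drift estimator} almost verbatim, replacing the drift objects by their quadratic analogs. Since $t$ is an observation time, $\tau(t)=t$, and I would introduce a parameter sequence $\theta_i$ to be fixed later and a vanishing step sequence $(\delta_i)_{i\in\N}$ from \Cref{assumption:6}. By the triangle inequality,
\begin{multline*}
    \E\left[\lvert \hat \Sigma_t^{\theta_i} - \E[\Sigma_t|\mathcal{A}_t] \rvert_2 \right]
    \le \E\left[\lvert S^{\theta_i}_{t,0} - S^{\theta_i}_{t,\delta_i} \rvert_2 \right] \\
    + \E\left[\lvert S^{\theta_i}_{t,\delta_i} - \E[Z^{\text{Q}}_{t+\delta_i} | \mathcal{A}_t] \rvert_2 \right]
    + \E\left[\lvert \E[Z^{\text{Q}}_{t+\delta_i} | \mathcal{A}_t] - \E[\Sigma_t|\mathcal{A}_t] \rvert_2 \right].
\end{multline*}
The plan is then to show that each of the three terms can be driven to zero along an appropriate sequence.

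First, the third term: by definition $\E[Z^{\text{Q}}_{t+\delta_i}\,|\,\mathcal{A}_t] = \hat\Sigma_t^{\delta_i}$, so \Cref{lem:thm1-lem1} (applied to $\Sigma$) gives $\lim_{i\to\infty} \E[|\hat\Sigma_t^{\delta_i} - \E[\Sigma_t|\mathcal{A}_t]|_2] = 0$. Second, the middle term: by the assumption that the NJODE convergence hypotheses are met for $Z^{\text{Q}}$, together with \Cref{prop:NJODE assumptions satisfied}, the main theorems of \citet{krach2025operator} apply, so convergence of $S^\theta$ to the conditional expectation process holds in the $d_k$-pseudometrics. Exactly as in the proof of \Cref{lem:thm1-lem2} and \Cref{thm:convergence of instantaneous drift estimator}, for $\tilde\epsilon_i = p_i/i$ there exist $m_i$ and random $N_i$ with
\begin{equation*}
    \E\left[\lvert S^{\theta^{\min}_{m_i,N_i}}_{t,\delta_i} - \E[Z^{\text{Q}}_{t+\delta_i} | \mathcal{A}_t] \rvert_2 \right] \le \tilde\epsilon_i/p_i = 1/i,
\end{equation*}
and I set $\theta_i \coloneqq \theta^{\min}_{m_i,N_i}$.

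The main obstacle, and the only step that requires new work beyond mechanically transcribing the drift proof, is the first term $\E[|S^{\theta_i}_{t,0} - S^{\theta_i}_{t,\delta_i}|_2]$, where both the parameter $\theta_i$ and the increment $\delta_i$ vary simultaneously. Using the architecture \eqref{equ:PD-NJ-ODE},
\begin{equation*}
    S^{\theta_i}_{t,\delta_i} = g_{\theta_i}\!\left(H_t + \int_t^{t+\delta_i} f_{\theta_i}(H_{s-}, s, t, U_t)\dd s\right)(G^{\theta_i}_{t,\delta_i}\text{ resp.})^\top,
\end{equation*}
so after writing $S^{\theta_i} = G^{\theta_i}(G^{\theta_i})^\top$ and using the uniform bound $|G^{\theta_i}| \le \sqrt{K}$ coming from the boundedness of $S^\theta$, it suffices to control $|G^{\theta_i}_{t,\delta_i} - G^{\theta_i}_{t,0}|_2$. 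The assumption $\sup_{m,N}|f_{\theta^{\min}_{m,N}}| < K$ gives $|H_{t+\delta_i} - H_t| \le K\delta_i$ uniformly in $i$, and since $g_\theta$ may be chosen $1$-Lipschitz (as in \citet[Proof of Theorem~4.1]{krach2025operator}), we obtain $|G^{\theta_i}_{t,\delta_i} - G^{\theta_i}_{t,0}|_2 \le K\delta_i \to 0$. Combined with the uniform bound on $G^{\theta_i}$, dominated convergence yields $\E[|S^{\theta_i}_{t,0} - S^{\theta_i}_{t,\delta_i}|_2] \to 0$.

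Combining the three convergences concludes the proof. I expect no serious obstacle beyond ensuring that the Lipschitz/boundedness chain from $f_\theta$ through $H$ and $g_\theta$ to $S^\theta = GG^\top$ is carried through cleanly; everything else is a direct transcription of the drift case with $X^{\text{IQ}}$ replaced by $Z^{\text{Q}}$ and $\mu$ replaced by $\Sigma$.
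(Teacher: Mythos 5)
Your proposal is correct and follows essentially the same route as the paper, which simply states that the proof is obtained by adapting the proof of \Cref{thm:convergence of instantaneous drift estimator} (three-term triangle-inequality decomposition, \Cref{lem:thm1-lem1} for the idealized term, the NJODE $d_k$-convergence for the middle term, and the bounded-$f_\theta$/Lipschitz-$g_\theta$/dominated-convergence argument for the first term). Your additional observation that the product structure $S^\theta = G^\theta(G^\theta)^\top$ together with $|G^\theta|\le\sqrt{K}$ reduces control of $|S^{\theta_i}_{t,\delta_i}-S^{\theta_i}_{t,0}|$ to control of $|G^{\theta_i}_{t,\delta_i}-G^{\theta_i}_{t,0}|$ is exactly the kind of "accordingly" adaptation the paper leaves implicit, and it is carried out correctly.
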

The proof of this theorem follows by adapting the proof of \Cref{thm:convergence of instantaneous drift estimator} accordingly. Moreover, \Cref{rem:boundedness time derivative} applies equivalently.

\begin{rem}\label{rem:joint instantaneous coeff estimation}
    The instantaneous estimators also have a computational advantage over the baseline estimators of \Cref{sec:Details Assumptions and Theoretical Guarantees for the Coefficient Estimates}. In particular, including the division by the step size $\Delta = t - \tau(t)$ in the definition of the target processes $X^{\text{IQ}}, Z^\text{Q}$, the range of values of these target processes becomes smaller, in general. Therefore, the NJODE should better approximate them, reducing the absolute error of the model, due to similar arguments as in \Cref{rem:self-injected bias correction}.  
    However, the target process $Z^\text{Q}$ for the diffusion estimator still includes a bias term, which can be reduced similarly as in \Cref{rem:self-injected bias correction}. In particular, we can consider the process of the \emph{quadratic bias-corrected increment's quotient} of $X$, i.e., the quotient of $Z^\text{BC}$,
    \begin{equation}\label{equ:BC quotient of Z}
        Z^{\text{BCQ}}_t = \frac{(X_t - \E[X_{t}|\mathcal{A}_{\tau(t)}]) (X_t - \E[X_{t}|\mathcal{A}_{\tau(t)}])^\top}{t - \tau(t)},
    \end{equation}
    which decreases the value of its conditional expectation $\E[Z^{\text{BCQ}}_t | \mathcal{A}_{\tau(t)}]$. 
    While we do not have access to $\E[X_{t}|\mathcal{A}_{\tau(t)}]$, the NJODE output $G^\theta$ approximates $\E[X^{\text{IQ}}_t | \mathcal{A}_{\tau(t)}]$, which yields the approximation 
    $$(t - \tau(t)) G^\theta_t + X_{\tau(t)} \approx \E[X_{t}|\mathcal{A}_{\tau(t)}].$$ This can be used to define
    \begin{equation*}
        \tilde Z^{\text{BCQ}}_t = \frac{\left(X_t - X_{\tau(t)} -   (t - \tau(t)) G^\theta_t\right) \left(X_t - X_{\tau(t)} -   (t - \tau(t)) G^\theta_t\right)^\top}{t - \tau(t)} = (t - \tau(t))(X^{\text{IQ}}_t - G^\theta_t) (X^{\text{IQ}}_t -  G^\theta_t)^\top,
    \end{equation*}
    as target for training $S^\theta$. By training a joint model for $G^\theta, S^\theta$, this leads to a self-injected bias reduction.
\end{rem}


\section{The Generative Procedure}\label{sec:The Generative Procedure}

For the results in this section, we refine \Cref{ass:1} and impose the following.
\begingroup
\renewcommand{\theassumption}{1'}
\begin{assumption}\label{ass:1.prime}
    The dynamics of the diffusion process $X$ are given by
    \begin{equation}\label{eq:def X prime}
        X_t = x_0 + \int_0^t \mu_s(X_{s})\dd s + \int_0^t \sigma_s(X_{s})\dd W_s,\, \qquad \text{for}~t\in [0, T],
    \end{equation}
    where $\mu$ and $\sigma$ are continuous and bounded functions on $[0, T]\times \R^d$ with values in $\R^d$ and $\R^{d\times m}$, respectively,  and $W=(W_t)_{t\in [0, T]}$ is an $m$-dimensional standard Brownian motion. In addition, we assume that $x\mapsto \sigma_t(x)$ is uniformly H\"older-continuous and that $\sigma_t \sigma^\top_t$ is uniformly elliptic (uniformly positive definite). 
\end{assumption}
\endgroup
Under this assumption it is a classical result that the law of $X$ is unique; see e.g.~\cite[Thm.~3.2.1]{stroock2007multidimensional}.

In this section, we use the learned characteristics $(\hat \mu_t^{\delta_i,\theta^{\min}_{m_i,N_i}})_{i \in \N}$ and $(\hat \Sigma_t^{\delta_i,\theta^{\min}_{m_i,N_i}})_{i \in \N}$ for a generative task. 
In the following procedure and results, these baseline estimators of \Cref{sec:Details Assumptions and Theoretical Guarantees for the Coefficient Estimates}, can equivalently be replaced by the more sophisticated instantaneous estimators $(\hat \mu_t^{\theta^{\min}_{m_i,N_i}})_{i \in \N}$ and $(\hat \Sigma_t^{\theta^{\min}_{m_i,N_i}})_{i \in \N}$ of \Cref{sec:Estimating the Instantaneous Coefficients}. 
Via an Euler-Maruyama scheme, we construct approximate laws $(\hat {\mathbb P}^i)_{i \in \N}$, which we show to converge to the true law of the underlying process $X$, as $i \to \infty$. 

Consider a fixed time $\bar t \in [0, T]$. This can be an observation time, but it does not need to. In case it is not an observation time, we define $\bar t' = \tau(\bar t)$ to be the last observation time before and use $\bar t '$ instead of $\bar t$, to simplify the notation.
At this time we have collected $\bar k \coloneqq \kappa(\bar t)$ observations which give us the history $(O_1, \ldots, O_{\bar k}, 0, \ldots)\in \mathscr O^{\mathbb N}$. We next adapt the observation framework from \Cref{sec:information sigma-algebra} into a ``simulation framework''. For this we fix $\delta>0$ and extend in \eqref{eqn:obs.times} the observation times before $\bar t$\footnote{%
These observations before $\bar t$ are the observed history on which we want to condition.
} with deterministic $\delta$-spaced times after $\bar t$.  With the notation from \Cref{assumption:6}, this leads to $\P(t_{\kappa(\bar t)+k}= \bar t+ \delta k\, |\, \mathcal A_{\bar t})=1$, so that $0 = t_0 < t_1< \cdots < t_{\bar k}= \bar t < t_{\bar k+1}<t_{\bar k+2}<\cdots \leq T $ becomes 
\begin{align*}
	0 = t_0 < t_1< \cdots < t_{\bar k }= \bar t < t_{\bar k}+\delta< t_{\bar k}+2\delta<\cdots \leq T\,,
\end{align*} 
with $M_{\bar t + m \delta, l} = 1$ for all $m \in \N_{\geq 1}$ and $1\leq l\leq d$. With this, $X_{\bar t + m \delta}\odot M_{\bar t + m \delta}$ simply becomes $X_{\bar t + m \delta}$, which is to say that after time $\bar t$ we have full observations (since we generate them ourselves). 

In this setting, we run the following online estimation and simulation scheme. We start with an $d$-dimensional $(\mathbb F, \mathbb P)$-Brownian motion $B=(B_t)_{t \in [0, T]}$, which we take to be independent of the probabilistic framework we presented thus far and fix a number $K > 3 \max\{\|\mu\|_{\infty}, \|\Sigma\|_{\infty} \}$. We choose and fix $\theta \in \Theta$, and compute as in \Cref{sec:Details Assumptions and Theoretical Guarantees for the Coefficient Estimates} at the initial time $\bar t$ the prediction of the present state $ \tilde X_{\bar t}\coloneqq G_{\bar t, 0}$ and evaluate the learned coefficients $\hat \mu_{\bar t}^{\delta,\theta}$ and $\hat \Sigma_{\bar t}^{\delta,\theta}$ which we truncate at $K$ to make them bounded. 
In particular, we define 
\begin{equation*}
    (\hat \mu_{\bar t}^{\delta,\theta})_K \coloneqq (\hat \mu_{\bar t}^{\delta,\theta} \wedge K) \vee -K \quad \text{and} \quad (\hat \Sigma_{\bar t}^{\delta,\theta})_K \coloneqq (\hat \Sigma_{\bar t}^{\delta,\theta} \wedge K) \vee -K,
\end{equation*}
and use the same notation $(\cdot)_K$ also for other coefficients.
We use these to simulate the first step of the Euler-Maruyama scheme as
\begin{align}\label{eqn:generative.euler.1}
	\tilde X_{\bar t+h} = \tilde X_{\bar t} +(\hat \mu_{\bar t}^{\delta, \theta} )_K h + (\hat \Sigma_{\bar t}^{\delta, \theta})_K ^{1/2} (B_{\bar t+h}- B_{\bar t})\qquad \text{for}~ h \in (0, \delta]\,,
\end{align}
where $(\hat \Sigma_{\bar t}^{\delta, \theta})^{1/2}$ is a symmetric positive semi-definite square-root\footnote{%
If the estimator $\hat \Sigma_{\bar t}^{\delta, \theta}$ is strictly positive-definite, then there exists a unique positive-definite square-root. However, in general, the estimator as we defined it can become positive semi-definite, hence, multiple symmetric positive semi-definite square-roots can exist, out of which we choose one.
} of the $d\times d$-matrix $\hat \Sigma_{\bar t}^{\delta, \theta}$. In the notation, we do not make $K$ explicit, but $\tilde X_{\bar t+ h}$ clearly depends on the choice of $K$. In the $(m+1)$'st step, i.e.~starting at time $t= t_{\bar k}+m\delta$, we have collected the observations $(O_{1}, \ldots, O_{\bar k}, O_{\bar k +1}, \ldots, O_{\bar k + m}, 0,  \ldots )$ comprised of the potentially partially observed real-world data before $\bar t$, and of the fully observed generated samples after time $\bar t$. We then compute $\hat \mu_{\bar t+ m \delta}^{\delta, \theta}$ and $\hat \Sigma_{\bar t+m \delta}\in \R^{d\times d}$ as in \Cref{sec:Details Assumptions and Theoretical Guarantees for the Coefficient Estimates}, which we use to simulate  
\begin{align}\label{eqn:generative.euler.2}
	\tilde X_{\bar t+ m \delta+h} = \tilde X_{\bar t+ m \delta} +(\hat \mu_{\bar t+ m\delta }^{\delta, \theta})_K h + (\hat \Sigma_{\bar t+m\delta }^{\delta, \theta})_K^{1/2} (B_{\bar t + m\delta +h}- B_{\bar t+ m \delta})\qquad \text{for}~ h \in (0, \delta]\,.
\end{align}
This concludes the description of the generative sampling scheme. 
So far, the coefficient estimates are defined on the generation grid only; to define them on the entire interval $[0,T]$, we simply use their constant continuations
\begin{equation}\label{equ:constant continuation of estimated coefs}
    \hat \mu_{\bar t+ m\delta + h}^{\delta, \theta} \coloneqq \hat \mu_{\bar t+ m\delta }^{\delta, \theta} \quad \text{and} \quad \hat \Sigma_{\bar t+m\delta + h }^{\delta, \theta} \coloneqq \hat \Sigma_{\bar t+m\delta }^{\delta, \theta} \quad \text{for}~ h \in [0,\delta).
\end{equation}
This definition is consistent in the sense that the solution of the SDE
\begin{equation*}
    \tilde X_t = \tilde X_{\bar t} + \int_{\bar t}^t (\hat \mu_{s}^{\delta, \theta})_K \dd s + \int_{\bar t}^t (\hat \Sigma_{s }^{\delta, \theta})_K \dd B_s \quad \text{for}~ t \in [\bar t, T],
\end{equation*}
coincides with the Euler scheme in \Cref{eqn:generative.euler.1,eqn:generative.euler.2}.

\subsection{Convergence of the Generative Sampling Scheme}\label{sec:Convergence of the Generative Sampling Scheme}
Let $D=(\delta_i)_{i \in \N}$ be a sequence as in \Cref{assumption:6}, and define for each $i\in \N$ a sampling scheme as above using the learned coefficients $(\hat \mu^{\delta_i,\theta^{\min}_{m_i,N_i}})_{i \in \N}$ and $(\hat \Sigma^{\delta_i,\theta^{\min}_{m_i,N_i}})_{i \in \N}$. Let $\tilde X^i=(\tilde X^i)_{t \in [\bar t, T]}$ be the process obtained via \Cref{eqn:generative.euler.1,eqn:generative.euler.2} and define $\P^i\coloneqq \operatorname{Law}_{\mathbb P}(\tilde X^i)$. 

\begin{lemma}
	Let $K>3 \max \{\|\mu\|_{\infty}, \|\Sigma\|_{\infty}\}$. Under Assumptions \ref{ass:1.prime} and \ref{assumption:2}--\ref{assumption:6}, the sequence $(\P^i)_{i \in \mathbb N}$ is tight in $\mathbf C([\bar t, T]; \, \R^d)$.
\end{lemma}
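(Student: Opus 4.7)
The plan is to invoke the Kolmogorov--Chentsov tightness criterion for continuous processes on the compact interval $[\bar t, T]$. This requires two ingredients: (i) one-point tightness of the initial laws $\{\mathrm{Law}(\tilde X^i_{\bar t})\}_{i \in \N}$, and (ii) a uniform moment bound on increments of the form
\begin{equation*}
    \E\bigl[|\tilde X^i_t - \tilde X^i_s|^p\bigr] \leq C |t-s|^{1+\beta}
\end{equation*}
for some $p, \beta, C > 0$ independent of $i \in \N$ and $s, t \in [\bar t, T]$. The crucial structural feature that I will exploit is that the truncation at level $K$ makes both learned coefficients bounded uniformly in $i$, $t$ and $\omega$, which reduces the problem to the standard tightness analysis for It\^o processes with bounded coefficients.

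For (i), I would observe that $\tilde X^i_{\bar t} = G^{\theta^{\min}_{m_i, N_i}}_{\bar t, 0}$ is the output of a bounded-output feed-forward NJODE, and (if necessary) apply the same truncation at level $K$ to obtain the uniform bound $|\tilde X^i_{\bar t}| \leq K$. Uniform boundedness of the initial values trivially gives tightness of the one-dimensional distributions at $\bar t$.

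For (ii), I would recall that on $[\bar t, T]$ the generated process satisfies
\begin{equation*}
    \tilde X^i_t - \tilde X^i_s = \int_s^t (\hat\mu_u^{\delta_i, \theta_i})_K \dd u + \int_s^t (\hat\Sigma_u^{\delta_i, \theta_i})_K^{1/2} \dd B_u,
\end{equation*}
where by construction $|(\hat\mu^{\delta_i,\theta_i}_u)_K|_2 \leq K\sqrt{d}$ and the Hilbert--Schmidt norm of $(\hat\Sigma^{\delta_i,\theta_i}_u)_K^{1/2}$ is bounded by some constant $K' = K'(K,d)$. Applying Jensen's inequality to the drift integral and the Burkholder--Davis--Gundy inequality to the stochastic integral yields for any $p \geq 2$
\begin{equation*}
    \E\bigl[|\tilde X^i_t - \tilde X^i_s|^p\bigr] \leq C_p \bigl(K^p |t-s|^p + (K')^p |t-s|^{p/2}\bigr).
\end{equation*}
Taking $p = 4$ and using $|t-s| \leq T - \bar t$ gives $\E[|\tilde X^i_t - \tilde X^i_s|^4] \leq C |t-s|^2$, i.e.\ the Kolmogorov--Chentsov bound with $1 + \beta = 2$, uniformly in $i$. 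Combined with (i), this delivers tightness of $(\P^i)_{i}$ in $\mathbf C([\bar t, T]; \R^d)$.

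The only delicate point I expect to address is the well-posedness of $(\hat\Sigma^{\delta_i,\theta_i}_u)_K^{1/2}$ as a symmetric PSD square root after truncation. This is however guaranteed by the parametrization $\hat\Sigma^{\delta,\theta} = S^\theta = G^\theta (G^\theta)^\top$, which is symmetric PSD by construction, so that the componentwise truncation can be equivalently interpreted (or replaced) by a truncation on the spectrum that preserves PSD-ness and entrywise bounds by $K$. Once this is set, the two ingredients above suffice.
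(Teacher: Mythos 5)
Your proposal is correct and follows essentially the same route as the paper: the paper's proof simply observes that the truncated coefficients in \eqref{eqn:generative.euler.1}--\eqref{eqn:generative.euler.2} are bounded uniformly in $i$ and cites a standard tightness characterization for diffusions (Stroock--Varadhan, Thm.~1.4.6), whereas you spell out the underlying standard computation (Kolmogorov--Chentsov via Jensen and BDG) explicitly. Your added remarks on the initial law and on preserving positive semi-definiteness under truncation are sensible refinements of details the paper leaves implicit, not a different method.
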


\begin{proof}
	Since the coefficients in \eqref{eqn:generative.euler.1}, \eqref{eqn:generative.euler.2} are bounded, the result follows from standard characterizations for tightness for diffusion processes; see e.g.~\cite[Thm.~1.4.6]{stroock2007multidimensional}.
\end{proof}
To simplify notation we write $\hat \mu_t^i\coloneqq \hat \mu_t^{\delta_i,\theta^{\min}_{m_i,N_i}}$ and $\hat \Sigma_t^i\coloneqq \hat \Sigma_t^{\delta_i,\theta^{\min}_{m_i,N_i}}$ in the sequel. To state the main result of this section, we need an auxiliary lemma.

\begin{lemma}\label{lem:thm.2-lem1}
	Let $K > 3\max \{\|\mu\|_{\infty}, \|\Sigma\|_{\infty} \}$. Under Assumptions \ref{ass:1.prime} and \ref{assumption:2}--\ref{assumption:6}, and in the setting of \Cref{thm:1} and \Cref{cor:1} (in particular, $t$ is an observation time), there exists a sequence $(m_i)_{i \in \N} \in \N^\N$ and a random sequence $(N_i)_{i \in\N}$ taking values in $\N^\N$ such that 
    \begin{equation*}
        \lim_{i \to \infty} \E\left[\left\lvert (\hat \mu_t^{i})_K  - \mu_t \right\rvert_2 \right] 
        = 0 
        = \lim_{i \to \infty} \E\left[\left\lvert (\hat \Sigma_t^{i})_K - \Sigma_t  \right\rvert_2\right].
    \end{equation*}
\end{lemma}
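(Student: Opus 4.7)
The plan is to reduce the claim to \Cref{cor:1} via the observation that under \Cref{ass:1.prime} the true coefficients are not only measurable with respect to $\mathcal A_t$ at observation times, but their entries lie strictly inside the truncation window $[-K,K]$, so that the truncation is a $1$-Lipschitz contraction that fixes the target.

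First, I would verify that \Cref{cor:1} is applicable. Under \Cref{ass:1.prime} the coefficients are Markovian, i.e.\ $\mu_t=\mu_t(X_t)$ and $\Sigma_t=\sigma_t(X_t)\sigma_t(X_t)^\top$. By hypothesis $t$ is an observation time, and since the lemma is stated in the generative setting where observations from $\bar t$ onwards are complete (so that $X_t\in \mathcal A_t$), the continuity of $\mu,\sigma$ forces $\mu_t,\Sigma_t\in \mathcal A_t$. Hence \Cref{cor:1} gives sequences $(m_i)_i$ and random $(N_i)_i$ with
\[
\lim_{i\to\infty}\E[\lvert \hat\mu_t^{i}-\mu_t\rvert_2]=0\quad\text{and}\quad \lim_{i\to\infty}\E[\lvert\hat\Sigma_t^{i}-\Sigma_t\rvert_2]=0.
\]

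Second, I would exploit the truncation. The operator $(\cdot)_K\colon x\mapsto (x\wedge K)\vee(-K)$, applied entry-wise, is $1$-Lipschitz on $\R$ with respect to the usual absolute value, hence its component-wise extension is $1$-Lipschitz with respect to $|\cdot|_2$ on both vectors and matrices. Moreover $(\cdot)_K$ is the identity on $[-K,K]$. From $K>3\max\{\|\mu\|_\infty,\|\Sigma\|_\infty\}$ the bounds $|\mu_t|_\infty\leq\|\mu\|_\infty<K$ and $|\Sigma_t|_\infty\leq \|\Sigma\|_\infty<K$ hold $\P$-a.s., so $(\mu_t)_K=\mu_t$ and $(\Sigma_t)_K=\Sigma_t$ a.s. Combining the two properties,
\[
\lvert(\hat\mu_t^{i})_K-\mu_t\rvert_2=\lvert(\hat\mu_t^{i})_K-(\mu_t)_K\rvert_2\leq \lvert\hat\mu_t^{i}-\mu_t\rvert_2,
\]
and the analogous inequality holds for $\Sigma$. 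Taking expectations and applying the limits from \Cref{cor:1} yields the claim.

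The main obstacle is really just the bookkeeping of the first paragraph: ensuring that the hypotheses of \Cref{cor:1} are met in the generative setting (so that $\mu_t,\Sigma_t\in\mathcal A_t$ and not merely their conditional projections), and that the matrix norm used for the truncation behaves well. Both are essentially automatic given \Cref{ass:1.prime} together with the completeness of observations after $\bar t$; once those are in place, the truncation step is a one-line contraction argument.
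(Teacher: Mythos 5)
Your proof is correct, but the way you handle the truncation differs from the paper. The paper also reduces to \Cref{cor:1}, but then splits on the event $\{|\hat\mu_t^i|>K\}$: on its complement the truncation does nothing, and on the event itself the error is bounded by a constant times $\P[|\hat\mu_t^i|>K]$, which is controlled via the inclusion $\{|\hat\mu_t^i|>K\}\subseteq\{|\hat\mu_t^i-\mu_t|_2>\epsilon\}$ (using $K>3\|\mu\|_\infty$) together with Markov's inequality applied to the $L^1$-convergence from \Cref{cor:1}. You instead observe that component-wise clipping at level $K$ is $1$-Lipschitz and fixes $\mu_t$ (resp.\ $\Sigma_t$) almost surely, since $\|\mu\|_\infty,\|\Sigma\|_\infty<K$, so that $\lvert(\hat\mu_t^i)_K-\mu_t\rvert_2\leq\lvert\hat\mu_t^i-\mu_t\rvert_2$ pointwise and the conclusion follows in one line. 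Your contraction argument is shorter and dispenses with the $\epsilon$/Markov bookkeeping; it needs only that the target lies inside the truncation window, whereas the paper's event-splitting would still work for modifications of the clipping that are not contractions. One remark on your first paragraph: the lemma is stated ``in the setting of \Cref{thm:1} and \Cref{cor:1}'', and the setting of \Cref{cor:1} already \emph{assumes} $\mu_t,\Sigma_t\in\mathcal A_t$, so your derivation of this measurability from the Markovian form of the coefficients and completeness of observations is not needed (and your justification via completeness of observations after $\bar t$ is slightly off, since at $\bar t$ itself the observation may be incomplete); this is harmless, as it only adds an unnecessary verification rather than relying on a false step.
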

\begin{proof}
	 We deduce the result from \Cref{cor:1}. First observe that we have 
	\begin{align}\label{eqn:thm.2-lem1.proof.1}
		  \E[\lvert (\hat \mu_t^{i})_K - \mu_t \rvert_2 ] 
		  &= \E[(\mathbf 1_{|\hat \mu_t^i| \leq K}+\mathbf 1_{|\hat \mu_t^i| > K})\lvert (\hat \mu_t^{i})_K - \mu_t \rvert_2 ]
		  \leq \E[\lvert \hat \mu_t^{i} - \mu_t \rvert_2 ] + \E[\mathbf 1_{|\hat \mu_t^i| > K}\lvert K - \mu_t \rvert_2 ]. 
	\end{align}
	By \Cref{cor:1}, the first term on the right-hand side of \eqref{eqn:thm.2-lem1.proof.1} vanishes as $i \to \infty$. For the second term note that since by \Cref{ass:1} $\mu$ is bounded and $K > |\mu_t|$ we have that $|K-\mu_t| \leq  c$ for some constant $c< \infty$. Therefore, 
	\begin{align*}
		\E[\mathbf 1_{|\hat \mu_t^i| > K}\lvert K - \mu_t \rvert_2 ] \leq c \E[\mathbf 1_{|\hat \mu_t^i| > K}] = c \, \P [\{|\hat \mu_t^i| > K\}].
	\end{align*}  
	For $\epsilon>0$ sufficiently small we have that $\{|\hat \mu_t^i| > K \}\subseteq A^\epsilon_t \coloneqq \{ |\hat \mu_t^i-\mu_t|_2>\epsilon\}$. In fact, recall that $K>3\|\mu\|_\infty$. Therefore, if $\hat \mu^i_t >K$, then $\hat \mu_t^i - \mu_t^i > K - \|\mu\|_\infty>\epsilon$ and if $-\hat \mu^i_t >K$ then $\mu_t^i - \hat \mu_t^i > K - \|\mu\|_\infty>\epsilon$. Now, by \Cref{cor:1} and Markov's inequality, $\P[A_t^\epsilon] \leq \E[|\mu_t-\hat \mu^i_t\|/\epsilon\to 0$ as $i \to \infty$. Thus also the second term on the right-hand side of \eqref{eqn:thm.2-lem1.proof.1} vanishes as $i \to \infty$. With this we immediately deduce the result for $\mu$. The proof for the case of $\Sigma_t$ is analogous. 
\end{proof}

With \Cref{lem:thm.2-lem1}, \eqref{equ:constant continuation of estimated coefs}, triangle-inequality, continuity of the true coefficients (\Cref{ass:1}) and dominated convergence, we get for the truncated coefficients $((\hat \mu_t^i)_K)_{i \in \N}$ that 
	\begin{align*}
		\lim_{i \to \infty} \int_0^T \E[| (\hat \mu_t^i)_K-\mu_t|_2]\, \dd t \leq \int_0^T \lim_{i \to \infty} \left( \E[| (\hat \mu_{\tau(t)}^i)_K -\mu_{\tau(t)}|_2] + \sup_{h \in [0, \delta_i]}\E[| \mu_{\tau(t)} - \mu_{\tau(t) + h}|_2] \right) \, \dd t =  0\,.
	\end{align*}
	Let $\hat \mu^\infty$ be the $\mathbb L^1([0, T]\times \Omega, \dd t\otimes \dd \mathbb P; \R^d)$-limit of $((\hat \mu_t^i)_K)_{i \in \N}$. Similarly, for $((\hat \Sigma_t^i)_K)_{i \in \N}$ we have that 
	\begin{align*}
		\lim_{i \to \infty} \int_0^T \E[|(\hat \Sigma_t^i)_K-\Sigma_t|_2]\, \dd t \leq \int_0^T \lim_{i \to \infty}\left( \E[|(\hat \Sigma_{\tau(t)}^i)_K-\Sigma_{\tau(t)}|_2] + \sup_{h \in [0, \delta_i]}\E[| \Sigma_{\tau(t)} - \Sigma_{\tau(t) + h}|_2] \right)\, \dd t =  0\,,
	\end{align*}
	and we let $\hat \Sigma^\infty$ denote the $\mathbb L^1([0, T]\times \Omega, \dd t\otimes \dd \mathbb P;\, \R^{d\times m})$-limit of $((\hat \Sigma_t^i)_K)_{i \in \N}$. We next choose functions 
	\begin{align*}
		\mu^\infty :[0, T]\times \R^d \to \R^d \qquad \text{and}\qquad \Sigma^\infty :[0, T]\times \R^d\to \R^{d\times m}
	\end{align*} 
	satisfying $\mu^\infty_t(X_t)=\hat \mu_t^\infty$ and $\Sigma^\infty_t(X_t)=\hat \Sigma_t^\infty$ $\dd t\otimes \dd \mathbb P$-a.e.
\begin{theorem}\label{thm:thm.2}
	Under Assumptions \ref{ass:1.prime} and \ref{assumption:2}--\ref{assumption:6}, let $\hat {\mathbb P}^{\infty}$ be a cluster point of $(\hat{\mathbb P}^i)_{i \in \N}$. Suppose that $\hat {\mathbb P}^\infty$ solves the martingale problem for $(x_0, \mu^\infty, \Sigma^\infty)$, or equivalently, that $\mathbb {\hat P}$ is a weak solution of the SDE
	\begin{align*}
		Y_t = x_0 + \int_0^t \mu_s^\infty(Y_s)\, \dd s + \int_0^t (\Sigma_s^\infty)^{1/2}\, \dd W_s\qquad \text{for}~t \in [0, T]\,.
	\end{align*}
	Then $\hat {\mathbb P}^\infty= \operatorname{Law}_{\mathbb P}((X_{t})_{t \in [\bar t, T]})$.
\end{theorem}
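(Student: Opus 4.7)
The plan is to show that $\hat{\mathbb{P}}^\infty$ also solves the martingale problem for $(x_0, \mu, \Sigma)$, and then to invoke the classical uniqueness result \cite[Thm.~3.2.1]{stroock2007multidimensional} that, under \Cref{ass:1.prime}, identifies the unique solution of that martingale problem as the law of $X$. Since $\hat{\mathbb{P}}^\infty$ is assumed to solve the martingale problem with the limiting coefficients $(x_0, \mu^\infty, \Sigma^\infty)$, what must be established is the equivalence of these two martingale problems along typical paths of the canonical process $Y$ under $\hat{\mathbb{P}}^\infty$.

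The first step is to identify the limiting coefficients with the true coefficients along the trajectories of $X$. Combining \Cref{lem:thm.2-lem1} with the $\mathbb L^1(\dd t \otimes \dd\P)$-limits displayed just before the theorem statement, and using the defining relations $\mu^\infty_t(X_t)=\hat\mu^\infty_t$ and $\Sigma^\infty_t(X_t) = \hat\Sigma^\infty_t$, we obtain
\begin{equation*}
\mu^\infty_t(X_t) = \mu_t(X_t) \qquad \text{and} \qquad \Sigma^\infty_t(X_t) = \Sigma_t(X_t) \qquad \dd t \otimes \dd\P\text{-a.e.}
\end{equation*}
To pass from this ``equality along the trajectory of $X$'' to a \emph{Lebesgue-a.e.} equality on $[\bar t, T]\times \R^d$, I invoke the uniform ellipticity and boundedness in \Cref{ass:1.prime}: by the classical Aronson-type Gaussian bounds for uniformly elliptic diffusions, the marginal law of $X_t$ admits a strictly positive density on $\R^d$ for every $t\in (\bar t, T]$. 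Consequently, the sets $\{(t,x) : \mu^\infty_t(x)\neq \mu_t(x)\}$ and $\{(t,x):\Sigma^\infty_t(x)\neq \Sigma_t(x)\}$ have $\dd t \otimes \operatorname{Leb}$-measure zero on $(\bar t, T]\times \R^d$.

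Next I transfer this Lebesgue-a.e.\ equality into the martingale problem for $\hat{\mathbb{P}}^\infty$. Because $\Sigma^\infty = \Sigma$ Lebesgue-a.e., one may (if desired) replace $\Sigma^\infty$ by its uniformly elliptic version obtained by redefining it on a Lebesgue-null set, without changing the martingale problem. A Krylov-type a priori estimate for SDEs with bounded, measurable drift and uniformly elliptic diffusion \cite[cf.][Ch.~II]{stroock2007multidimensional} then implies that the marginal law of $Y_t$ under $\hat{\mathbb{P}}^\infty$ is absolutely continuous with respect to Lebesgue measure for every $t\in (\bar t, T]$. In particular, the Lebesgue-null sets on which $\mu^\infty$ differs from $\mu$, and $\Sigma^\infty$ from $\Sigma$, are also $\dd t \otimes \hat{\mathbb{P}}^\infty$-null. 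Hence the generators of the two martingale problems coincide along typical $Y$-paths, so that $\hat{\mathbb{P}}^\infty$ solves the martingale problem for $(x_0, \mu, \Sigma)$. Uniqueness of this martingale problem under \Cref{ass:1.prime} finishes the proof.

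The main obstacle is the measure-theoretic bridge in the middle: transferring the $\dd t \otimes \dd\P$-a.e.\ identity on the trajectories of $X$ to Lebesgue-a.e.\ identity on $[\bar t, T]\times \R^d$, and subsequently verifying that $\hat{\mathbb{P}}^\infty$ charges these null sets with zero occupation mass. Both steps rely on quantitative estimates for uniformly elliptic diffusions (positive density of the marginals and Krylov's inequality). A minor technicality is that the truncated, time-piecewise-constant estimators $(\hat \mu^i)_K,\,(\hat \Sigma^i)_K$ need not themselves be uniformly elliptic or continuous, so the ellipticity is only recovered in the $\mathbb L^1$-limit; this is precisely where one uses that $\Sigma^\infty = \Sigma$ Lebesgue-a.e.\ and may thus be replaced by a uniformly elliptic representative when applying Krylov's estimate.
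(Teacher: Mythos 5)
Your proposal is correct and follows essentially the same route as the paper: use the hypothesis that $\hat{\mathbb{P}}^\infty$ solves the martingale problem for $(x_0,\mu^\infty,\Sigma^\infty)$, identify $\mu^\infty$ with $\mu$ and $\Sigma^\infty$ with $\Sigma$ almost everywhere via \Cref{lem:thm.2-lem1} together with ellipticity-based density arguments, replace the coefficients in the generator, and conclude by uniqueness of the martingale problem for $(x_0,\mu,\Sigma)$ under \Cref{ass:1.prime}. The only difference is the technical device for the measure-theoretic bridge (Aronson-type lower bounds plus a Krylov estimate in your write-up, versus the existence of transition densities for $\hat{\mathbb{P}}^\infty$ cited from Stroock--Varadhan and Porper--Eidelman in the paper), which plays the same role in both arguments.
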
 

The requirement that $\hat {\mathbb P}$ solves the martingale problem for $(x_0, \mu^\infty, \Sigma^\infty)$ amounts to a stability property of the sequence of semimartingale laws $(\hat { \mathbb P}^i)_{i \in \N}$ and its cluster points $\hat{\mathbb P}^\infty$. We refer to e.g.~\cite[Ch.~11.3]{stroock2007multidimensional} for a classical treatment, or to  \cite{figalli2008existence} for more recent results.

\begin{proof}[Proof of \Cref{thm:thm.2}]
	Since $\hat {\mathbb P}^\infty$ solves the martingale problem for $(x_0, \mu^\infty, \Sigma^\infty)$, we have for all $f \in \C_c^\infty(\R^d)$ and $0 \leq s\leq t\leq T$ that $\hat {\mathbb P}^\infty$-a.s.
		\begin{align*}
		\E^{\hat{\mathbb{P}}^\infty}\bigg[f(X_t)-f(X_s) - \int_s^t({\mathcal L}^\infty_rf)(X_r)\, \dd r \, \Big|\, \mathcal F_s\bigg]= 	\E^{\hat{\mathbb{P}}^\infty} [f(X_t)]-f(X_s) - \int_s^t\E^{\hat{\mathbb{P}}^\infty}[( {\mathcal L}^\infty_rf)(X_r)\, |\, \mathcal F_s]\, \dd r =0\,,
	\end{align*}
	where 
	\begin{align*}
		(\mathcal {L}^\infty_tf)(x) = \mu_t^\infty(x)\cdot \nabla f(x) + \frac 12 \big(\nabla f(x)\big)^\top \Sigma_t^\infty(x) \nabla f(x)
	\end{align*}
	After an application of Fubini's theorem, we can rewrite the conditional expectations under $\hat {\mathbb P}$ in terms of a kernels $\hat p$, giving 
	\begin{align*}
		\int_{\R^d}f(y)\hat p_{s, t}(X_s, \dd y) -f(X_s) - \int_s^t \int_{\R^d}\big({\mathcal L}^\infty f_r(y)\big)\hat p_{s, r}(X_s, \dd y)\, \dd r = 0\,.
	\end{align*}
	Since $\Sigma^\infty_t$ is by \Cref{ass:1.prime} uniformly elliptic, we get for any $r>s$ that the kernel $p_{s, r}(X_s, \dd y)$ is $\hat{\mathbb P}$-a.s.~absolutely continuous with respect to Lebesgue measure; see \cite[Thm.9.1.1]{stroock2007multidimensional} and  \cite[Ch.~1]{porper1984two}. We therefore get a density $q_{s, r}:\R^d\times \R^d \to \R_{\geq 0}$ with which we write $p_{s, r}(X_s, \dd y)= q_{s, r}(X_s, y)\, \dd y$ so that the display just above becomes
	\begin{align*}
		\int_{\R^d}f(y)q_{s, t}(X_s,  y)\, \dd y-f(X_s) - \int_s^t \int_{\R^d}\big(\mathcal L^\infty f_r(y)\big)p_{s, r}(X_s, y)\, \dd y\, \dd r = 0\,.
	\end{align*}
	Using once more the representation of expectations under $\hat {\mathbb P}$ in terms of $ \hat p$, the existence of a density, and \Cref{lem:thm.2-lem1}, we get that $\E^{\hat{\mathbb P}}[|\mu_t^\infty - \mu_t|_2] = \int _{\R^d} |\mu^\infty_t(x) - \mu_t|_2\, \hat q_{0, t}(x_0, x)\dd x=0$. Integrating over $t \in [0, T]$, we find that  $\mu^\infty = \mu$ $\dd t\otimes \dd x$-a.s. Therefore in the display just above we can replace $\mu^\infty$ by $\mu$ and $\Sigma^\infty$ by $\Sigma$ without changing the value of the integral. This gives
	\begin{align*}
		\int_{\R^d}f(y)q_{s, t}(X_s,  y)\, \dd y-f(X_s) - \int_s^t \int_{\R^d}\big(\mathcal Lf_r(y)\big)p_{s, r}(X_s, y)\, \dd y\, \dd r = 0\,.
	\end{align*}
	where 
	$
		(\mathcal {L}_tf)(x) = \mu_t(x)\cdot \nabla f(x) + \frac 12 (\nabla f(x))^\top \Sigma_t(x) \nabla f(x)
	$. This means that $\hat {\mathbb P}^\infty$ solves the martingale problem for $(x_0, \mu, \Sigma)$, i.e.~$\hat {\mathbb P}^\infty$ is the law of a weak solution of the dynamics in \eqref{eq:def X}. But by the comment immediately after \Cref{ass:1.prime} this law is unique. It follows that we must have $\mathbb P = \mathbb {\hat P}^\infty$. 
\end{proof}




\section{Experiments}\label{sec:Experiments}

In this section, we apply the coefficient estimation and path generation procedure of the previous sections to several datasets and compare the different proposed approaches. We also use examples that do not satisfy all the assumptions about the underlying process (\Cref{ass:1}), showing that empirically our method works in more general settings than those for which we were able to derive theoretical guarantees. For example, theoretical boundedness of the parameters is not essential in practice (and not satisfied, for example, by a geometric Brownian motion considered in \Cref{sec:Geometric Brownian Motion}), where the observed parameters are empirically bounded. Nevertheless, we need this assumption in our proofs. Moreover, our NJODE based method can naturally deal with irregular and incomplete observations in the training set (as well as in the initial sequence to be conditioned on) and it can handle path dependence in the parameters (in contrast to \Cref{ass:1.prime}).

The code for running the experiments is available at \url{https://github.com/FlorianKrach/PD-NJODE} and additional details about the implementation can be found in \Cref{sec:Details for implementation}.

\subsection{Geometric Brownian Motion}\label{sec:Geometric Brownian Motion}
We consider a one-dimensional geometric Brownian motion (GBM) satisfying the SDE
\begin{equation*}
    \dd X_t = \mu X_t \dd t + \sigma X_t \dd W_t,
\end{equation*}
where $\mu, \sigma >0$ are constants and $W$ is a Brownian motion. We use the parameters $\mu = 2, \sigma = 0.3$ and set $X_0 = 1$. In the following, we compare the 4 different coefficient estimation approaches introduced in \Cref{sec:Details Assumptions and Theoretical Guarantees for the Coefficient Estimates,sec:Estimating the Instantaneous Coefficients}: 
\begin{itemize}
    \item the baseline estimators of the drift and diffusion coefficient trained separately (cf.~\Cref{thm:1}) \textbf{(Base)},
    \item the baseline estimators of the drift and diffusion coefficient trained jointly with bias-reduction for the diffusion estimator (cf.~\Cref{rem:self-injected bias correction}) \textbf{(Joint Base)},
    \item the instantaneous drift and diffusion coefficient estimators trained separately (cf.~\Cref{thm:convergence of instantaneous drift estimator,thm:convergence of instantaneous diffusion estimator}) \textbf{(Instant)}, and
    \item the instantaneous drift and diffusion coefficient estimators trained jointly with bias-reduction for the diffusion estimator (cf.~\Cref{rem:joint instantaneous coeff estimation}) \textbf{(Joint Instant)}.
\end{itemize}
For all methods, we use the same training dataset (with the special input and output feature processes added individually by necessity) and comparable training. After training, we use the learned estimators of each approach to generate $5000$ new paths starting from $X_0$. We use a standard estimator \citep[see the financial estimator in][Example~2]{heiss2024nonparametric} to compute the estimated values of $\mu, \sigma$ on each of the sets of the generated paths\footnote{%
This estimator uses the knowledge of the distribution of $X$ to compute $\mu,\sigma$ over the entire paths. 
In contrast to this, our drift and diffusion estimators do not use any distributional knowledge, but only the training paths, to estimate the current values of drift $\mu_t = \mu X_t$ and diffusion $\sigma_t = \sigma X_t$, which is much more difficult. %
}. 
Generated paths with invalid values for a GBM, i.e., values $\leq 0$, are excluded before computing these estimates.
Since the models do not get any information about the true underlying model except for the paths of the training set, we compare these estimates to the corresponding estimates on the paths of the training dataset \textbf{(Reference)}. These estimates constitute the retrievable ground-truth, while the true values $\mu,\sigma$ are concealed. The results of the different methods are shown in \Cref{tab:results GBM comparison}. 
We can see an increase in quality with the increasing complexity of the estimation method. 
For the base method, we see a much too large variance in the generated paths, which results from the inaccuracy in the learning method. In particular, without bias reduction, the prediction for one step of $\Delta=0.01$ ahead contains a small upward bias, leading to an overestimation of $\sigma$. Moreover, an error of size $\epsilon$ in the prediction of $(G^\theta_2)_{t,\Delta} \approx \sqrt{\E[Z_{t+\Delta} | \mathcal{A}_t]}$ leads to an error of $\epsilon/\sqrt{\Delta}= 10\epsilon$ in the estimated diffusion coefficient. The base method is the only method that leads to invalid paths for roughly $4.7\%$ of its generated samples.
For the joint base method, the bias reduction helps to significantly improve the estimates of the diffusion, but the use of instantaneous estimates leads to an even greater improvement. 
As suggested by our theoretical analysis, the joint instantaneous method (including bias-reduction for the diffusion estimate) clearly outperforms all others and leads to a generated dataset with estimated parameters $\mu,\sigma$ very similar to those of the training set. 
In the following, we therefore focus on this method and do not report further results for the other ones.

In \Cref{fig:GBM gen paths} we plot $1000$ training and generated paths each. Visually, the distributions look nearly identical. To further verify this, the distributions of $X_t$ at $t=T/2=0.5$ and $t=T=1$ of the true and generated paths are plotted in \Cref{fig:GBM distribution}, which shows a very good match.
Moreover, in \Cref{fig:GBM coeffs} we show the estimated and true drift and diffusion coefficients along one generated path. We see that the joint instantaneous method replicates the true coefficients with high accuracy. 

As described in \Cref{sec:The Generative Procedure}, we can equivalently use the generative method to generate new samples based on a given history of observations. In \Cref{fig:GBM gen paths starting from sequence} we use the first training path until $t=0.55$ as the starting sequence, after which $1000$ different continuations of the path are generated.

\begin{table}[tb]
\caption{Geometric Brownian motion parameters $\mu,\sigma$ estimated (via standard method) on datasets generated based on differently learnt drift and diffusion coefficient estimators. As reference, we show the estimated parameters on the training dataset, which was used to train the coefficient estimators.}
\label{tab:results GBM comparison}
\begin{center}
\begin{tabular}{l | r r r }
\toprule
    & $\mu$ & $\sigma$ & \# invalid paths \\
\midrule[\heavyrulewidth]
True params & 2.0 & 0.3 & - \\
Reference & 1.9841 & 0.2941 & 0 \\
\midrule
Base & 2.1478 &	0.8154 & 234 \\
Joint Base & 2.0892 & 0.2344 & 0 \\
Instant & 1.8717 &	0.2575 & 0 \\
Joint Instant & 1.9619 & 0.2974 & 0 \\
\midrule
Joint Instant 1-step & 1.9819 & 0.2909 & 0 \\
\bottomrule
\end{tabular}
\end{center}
\end{table}

\begin{figure}
\centering
\includegraphics[width=1\textwidth]{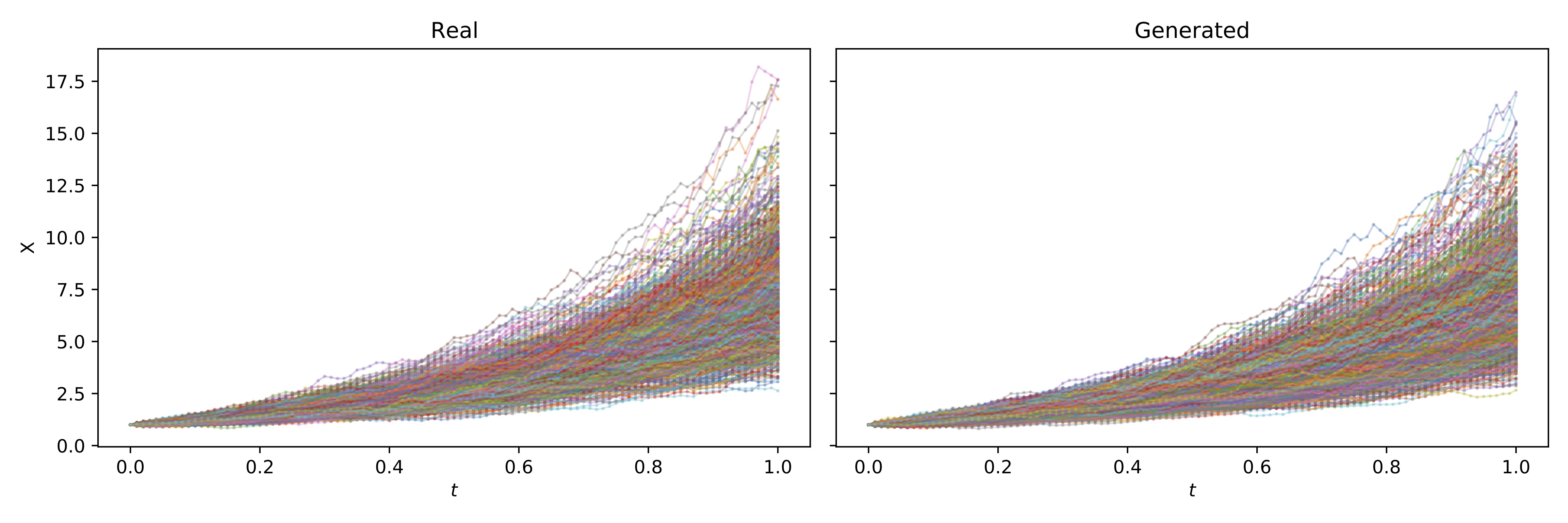}
\caption{Plot of true training paths and generated (with joint instantaneous method) paths, with $1000$ samples each.}
\label{fig:GBM gen paths}
\end{figure}

\begin{figure}
\centering
\includegraphics[width=0.49\textwidth]{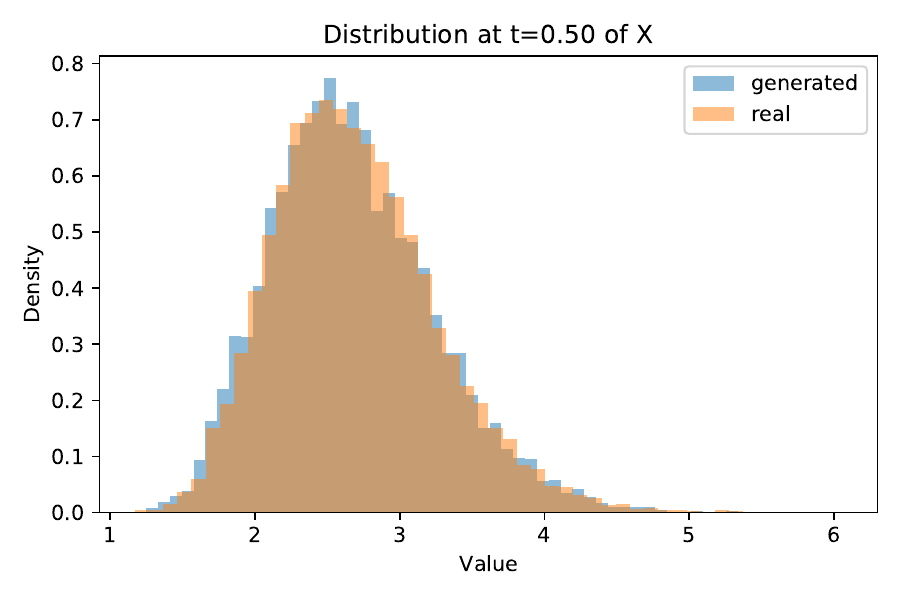}
\includegraphics[width=0.49\textwidth]{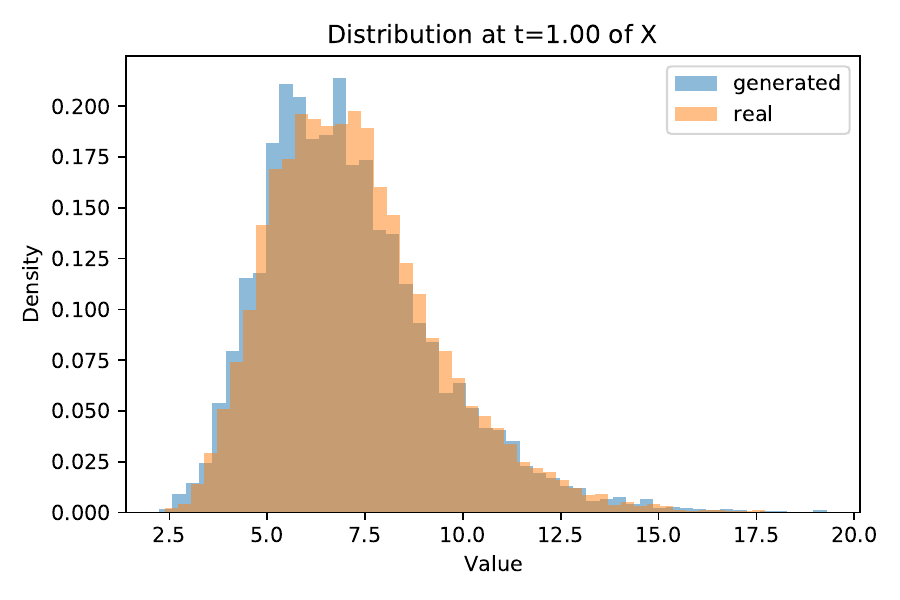}
\caption{Distribution of $X_t$ at $t=0.5$ and $t=T=1$ of true training paths and generated (with joint instantaneous method) paths.}
\label{fig:GBM distribution}
\end{figure}

\begin{figure}
\centering
\includegraphics[width=1\textwidth]{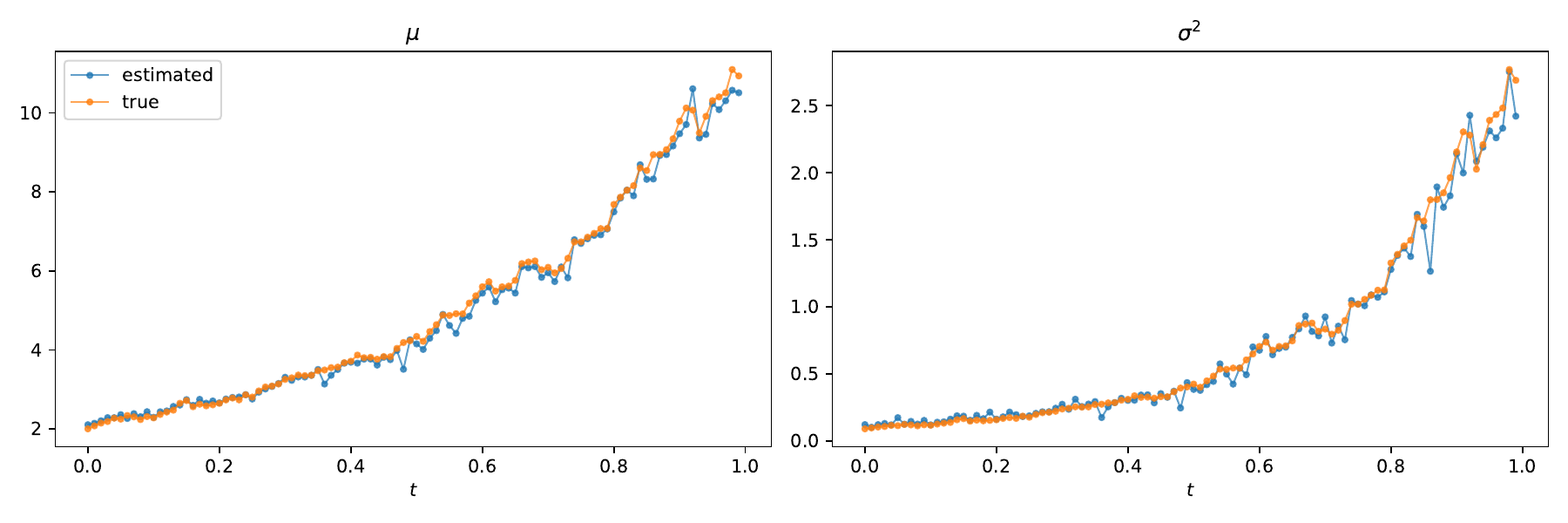}
\caption{True and estimated (with joint instantaneous method) drift and diffusion coefficients along one generated path.}
\label{fig:GBM coeffs}
\end{figure}

\begin{figure}
\centering
\includegraphics[width=0.6\textwidth]{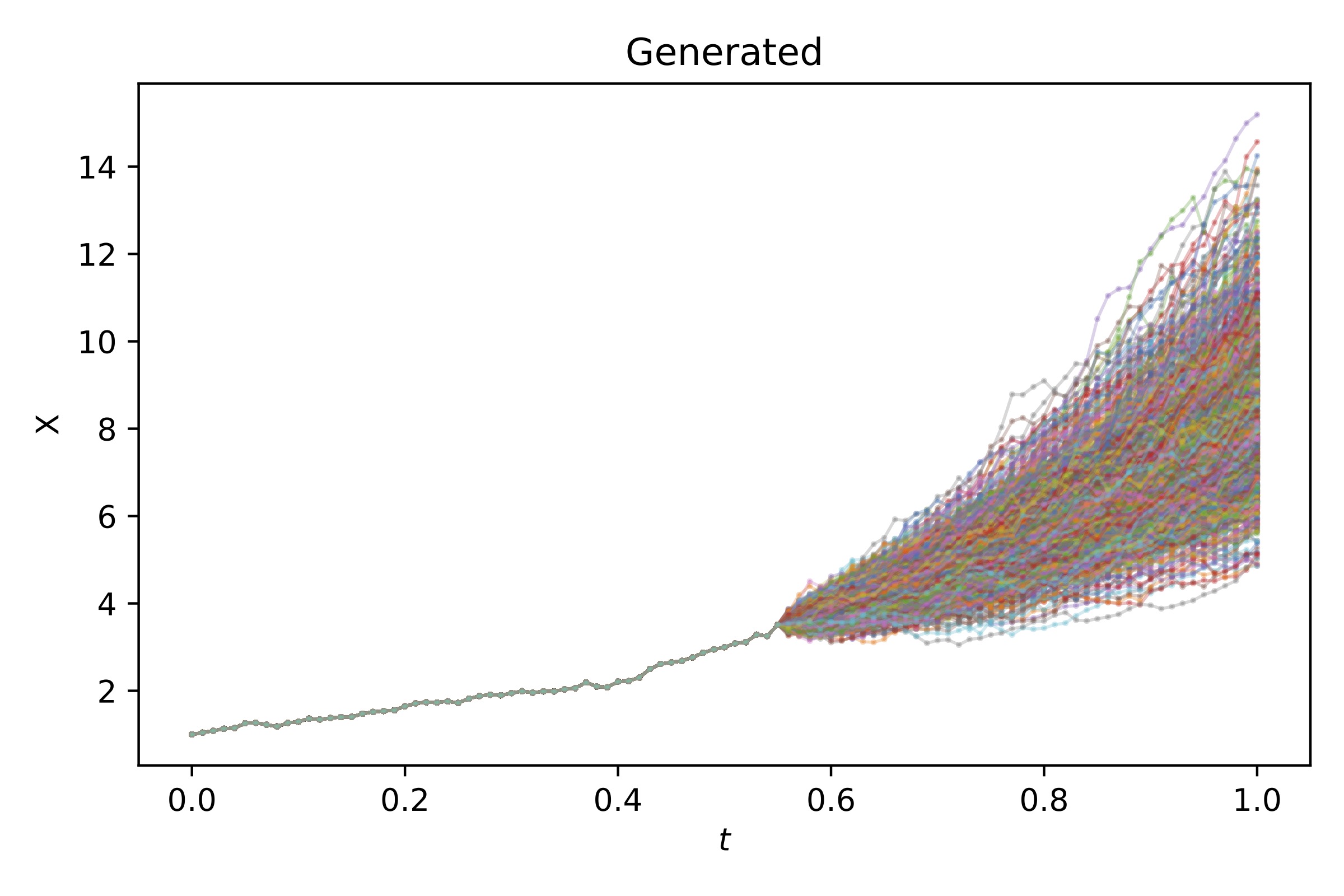}
\caption{$1000$ generated (with joint instantaneous method) path continuations, starting from the history of the first training path until $t=0.55$.}
\label{fig:GBM gen paths starting from sequence}
\end{figure}

\subsubsection{1-Step Ahead Training}\label{sec:1-step Ahead Training}
Even though the training for long-term predictions is recommended (cf.~\Cref{rem:long-term prediction suggested}), the generative method also works quite well without it in the case of complete regular observations. Here, we use the same dataset as before, however, with observation probability $p=1$ instead of $p=0.1$ (used before) and train without the learning approach for long-term predictions. We used the joint instantaneous coefficient estimation method. The results are shown in \Cref{tab:results GBM comparison}, named \textbf{(Joint Instant 1-step)}. We see that this training leads to results similar to those of the standard joint instantaneous method, outperforming all other methods\footnote{%
We note that training with the long-term prediction approach on this dataset should lead to better results than the joint instantaneous method, since it has roughly $10$ times as much training data available.}. In particular, we do not see small short-term errors blowing up over longer time periods as discussed in \Cref{rem:long-term prediction suggested}, which is a side effect of the joint instantaneous training that leads to very high accuracy in the instantaneous parameter predictions.

\subsection{Ornstein-Uhlenbeck Process}\label{sec:Ornstein-Uhlenbeck Process}
We consider a one-dimensional Ornstein-Uhlenbeck (OU) process satisfying the SDE
\begin{equation}\label{equ:OU}
    \dd X_t = \kappa (\theta - X_t)\dd t + \sigma \dd W_t,
\end{equation}
where $W$ is a Brownian motion, $\kappa > 0$ is the speed of reversion to the mean, $\theta \in \R$ is the long-term mean of the process, and $\sigma > 0$ is the volatility. We use the parameters $\kappa = 2, \theta=3, \sigma=1$ and set $X_0 = 1$, which leads to a growth towards $\theta=3$ (in mean). Based on the results of \Cref{sec:Geometric Brownian Motion}, we only report results for the joint instantaneous parameter estimation methods.
Similarly as for the GBM case, we estimate the parameters of the OU model (see \Cref{sec:Estimation of the Ornstein-Uhlenbeck Parameters} for the description of the estimation method) on the generated samples and on the training set and compare those to the true parameters in \Cref{tab:results OU comparison}.
Moreover, we plot $1000$ paths of the training set and the generated samples each in \Cref{fig:OU gen paths} and show the comparison of the marginal distributions of the true and generated values $X_t$ for $t=T/2=0.5$ and $t=T=1$ in \Cref{fig:OU distribution}.

\begin{table}[tb]
\caption{Ornstein-Uhlenbeck parameters $\kappa, \theta, \sigma$ estimated (see \Cref{sec:Estimation of the Ornstein-Uhlenbeck Parameters} for estimation method) on the generated samples and on the training dataset as reference.}
\label{tab:results OU comparison}
\begin{center}
\begin{tabular}{l | r r r }
\toprule
    & $\kappa$ & $\theta$ & $\sigma$ \\
\midrule[\heavyrulewidth]
True params & 2.0 & 3.0 & 1.0 \\
Reference & 2.0213 & 3.0060 & 1.0091 \\
\midrule
Joint Instant & 2.1642 & 3.0216 & 1.0293 \\
\bottomrule
\end{tabular}
\end{center}
\end{table}

\begin{figure}
\centering
\includegraphics[width=1\linewidth]{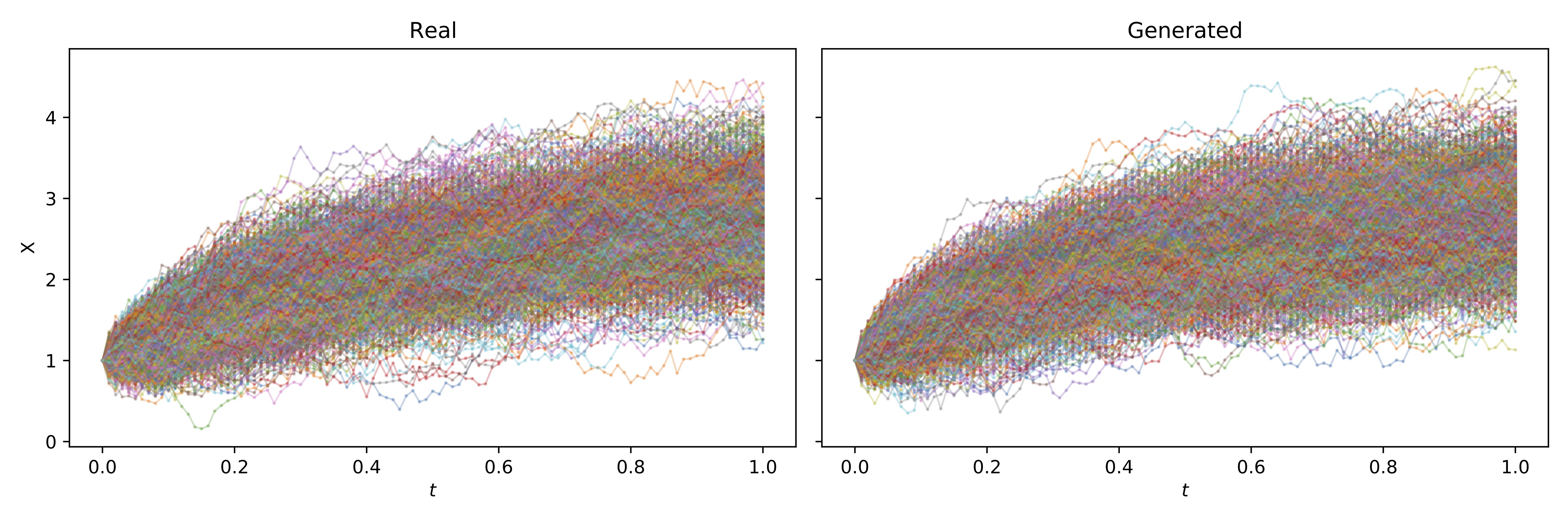}
\caption{Plot of true training paths and generated (with joint instantaneous method) paths, with $1000$ samples each.}
\label{fig:OU gen paths}
\end{figure}

\begin{figure}
\centering
\includegraphics[width=0.49\textwidth]{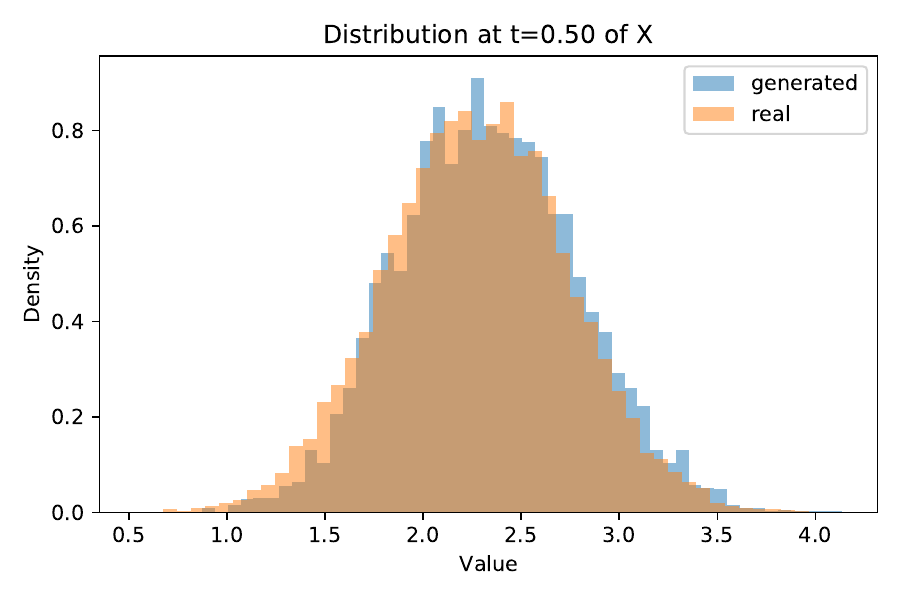}
\includegraphics[width=0.49\textwidth]{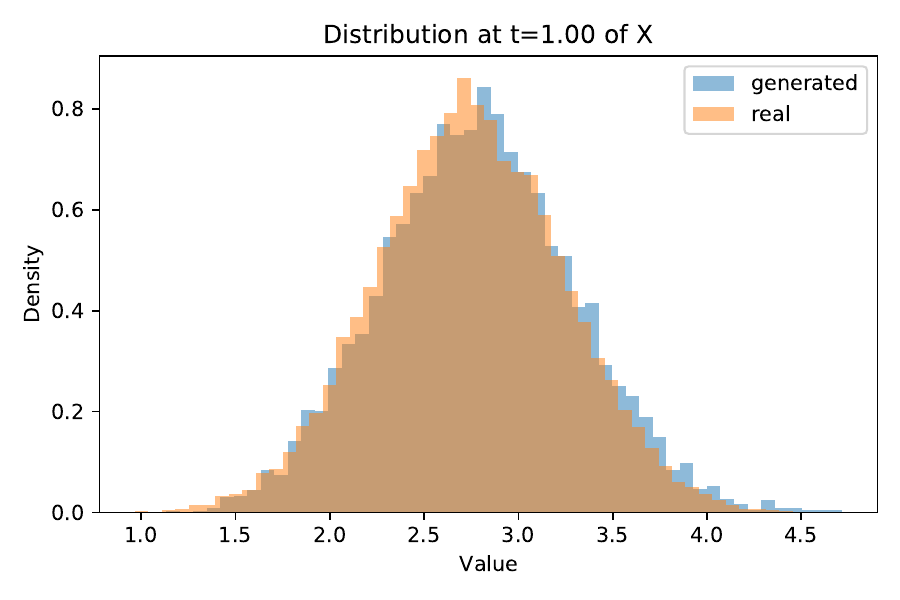}
\caption{Distribution of $X_t$ at $t=0.5$ and $t=T=1$ of true training paths and generated (with joint instantaneous method) paths.}
\label{fig:OU distribution}
\end{figure}


\if\addackn1
	\section*{Acknowledgement}
 
\fi

\bibliographystyle{iclr2021_conference}
\bibliography{references.bib}

\if\inclapp1
	\clearpage
	\appendix
    \section*{Appendix}

\fi


\appendix

\section{Applying the NJODE in the Generative Setting}\label{sec:Applying the NJODE in the Generative Setting}

To apply the NJODE, we need the main convergence results \citet[Theorems~4.1 and~4.]{krach2025operator}. In particular, we need to show that Assumptions~1 to~7 of \citet{krach2025operator} are satisfied in our setting through \Cref{ass:1,assumption:2,assumption:3,assumption:4}.
We first recall Assumptions~1 to~7 of \citet{krach2025operator} adjusted for our setting (since we are in the case $|\Xi|=1$, the assumptions simplify and  Assumption~6 can be dropped entirely) and then prove \Cref{prop:NJODE assumptions satisfied}.

\begin{AssumptionAppendix} \label{AssumptionAppendix:1}
For every $1\leq k, l \leq K$, $M_k$ is independent of  $t_l$and $n$, and 
$ \P (M_{k,i} =1 ) > 0$ for every component  $1 \leq i \leq d_X$ of the vector  (every component can be observed at any observation time and point).
\end{AssumptionAppendix} 
\begin{AssumptionAppendix} \label{AssumptionAppendix:3}
Almost surely $X$ is  not observed at a jump, i.e., $\P( \Delta X_{t_i} \neq 0 | i\leq n) = 0$ for all $ 1 \leq i \leq \bar n$.
\end{AssumptionAppendix} 
\begin{AssumptionAppendix} \label{AssumptionAppendix:4}
We assume that $F^X, F^Z$ are measurable and that there exist measurable functions $f^X, f^Z : [0,T]  \times (\R^{d})^{\N} \to \R^{d_X}$, generalized derivatives of $F^X,F^Z$, respectively, such that for all $t \in [0,T]$ and $(f,F) \in \{ (f^X,F^X),(f^Z,F^Z)\}$,
\begin{equation*}
F(t,O_{[0,\tau(t)]}) = F(\tau(t),O_{[0,\tau(t)]}) + \int_{\tau(t)}^t f(s,O_{[0,\tau(t)]}) ds.
\end{equation*}
Moreover, we assume that
\begin{equation}\label{equ:AssumptionAppendix4 bound}
\E\left[ \frac{1}{n} \sum_{i=1}^n  \Big(|F(t_i, O_{[0,t_i]} )|_2^2 + |F(t_{i-1},  O_{[0,t_{i-1}]} )|_2^2 
+ \int_0^T | f(t, O_{[0,\tau(t)]} )  |_2^2 dt  \Big) \right] < \infty.
\end{equation}
\end{AssumptionAppendix} 
\begin{AssumptionAppendix} \label{AssumptionAppendix:5}
We assume square integrability at observations $\displaystyle \E\left[\frac{1}{n}  \sum_{i=1}^n |X_{t_i}|_2^2 \right] < \infty$.
\end{AssumptionAppendix} 
\begin{AssumptionAppendix} \label{AssumptionAppendix:6}
The random number of observation times $n$ is integrable, i.e., $\E[n] < \infty$.
\end{AssumptionAppendix} 
\setcounter{theorem}{6}
\begin{AssumptionAppendix} \label{AssumptionAppendix:8}
The process $X$ is independent of the observation framework, i.e., of the random variables $n, (t_k, M_k)_{k \in \N}$. 
\end{AssumptionAppendix} 

\begin{proof}[Proof of \Cref{prop:NJODE assumptions satisfied}]
    First note that we are in the original setting of \citet{krach2022optimal}, i.e., in the setting $|\Xi|=1$ as in \citet[Remark 2.1]{krach2025operator}. Therefore, our \Cref{assumption:2,assumption:3,assumption:4} directly imply that Assumptions 1, 5, 6 and 7 of \citet{krach2025operator} are satisfied. Moreover, since $X$ is continuous by definition, Assumption 2 is satisfied. 
    $Z$ is continuous except for jumps at observations, where the left and right limits are observed, which we can deal with using \citet[Remark~2.4]{krach2025operator}.
    The uniform boundedness of $\mu, \sigma$, say by a constant $M$, implies integrability, since from \eqref{eq:def X} we get $|X_t| \leq |x_0| + M t + M |W_t| $. Since all moments of $W_t \sim N(0,t)$ are finite, all moments of $X_t$ are finite, hence, Assumption 4 is satisfied for $X$ and $Z$.
    
    Finally, we show Assumption 3.
    For $X$, note that the function $F$ is measurable and that we can us \eqref{eq:def X} to write for $s=\tau(t)$,
    \begin{equation*}
    \begin{split}
        \E[X_t | \mathcal{A}_s] &= \E[X_s | \mathcal{A}_s] + \E \left[ \int_s^t \mu_r(X_{\cdot \wedge r})\dd r \mid \mathcal{A}_s \right] + \E \left[ \int_s^t \sigma_r(X_{\cdot \wedge r})\dd W_r \mid \mathcal{A}_s \right] \\
        &= \E[X_s | \mathcal{A}_s] + \int_s^t \E \left[  \mu_r(X_{\cdot \wedge r})\mid \mathcal{A}_s \right] \dd r ,
    \end{split}
    \end{equation*}
    using Fubini's theorem (for conditional expectations) and that the integral with respect to $\dd W_r$ is a martingale.
    Measurability of the function $f^X(r,O_{[0,s]}) = \E \left[  \mu_r(X_{\cdot \wedge r})\mid \boldsymbol{\sigma}(O_{[0,s]}) \right]$ follows from continuity of $\mu$ and a similar argument as for $F^X$. Moreover, boundedness of $\mu$ implies that all powers of $f^X$ are integrable and since all moments of $X$ are finite, also the powers of $F^X$ are integrable (by Jensen's inequality). Hence, Assumption 3 holds for $X$. Next we use It\^o's formula to rewrite $Z$ for $\tau(t) \leq t \leq t_{\kappa(t)+1}$ as 
    \begin{equation*}
    \begin{split}
        Z_{t} &= \int_{\tau(t)}^t 2 (X_s - X_{\tau(t)}) \dd X_s^\top + \int_{\tau(t)}^t \dd [X_s, X_s^\top] \\
        &= \int_{\tau(r)}^t 2 (X_s - X_{\tau(t)}) \mu_s^\top \dd s + \int_{\tau(t)}^t 2 (X_s - X_{\tau(t)}) (\sigma_s \dd W_s)^\top  + \int_{\tau(t)}^t \Sigma_s \dd s.
    \end{split}
    \end{equation*}
    Similarly as before for $X$, we have that 
    \begin{equation*}
        \E[Z_t | \mathcal{A}_{\tau(t)} ] = \int_{\tau(t)}^t \E \left[ 2 (X_s - X_{\tau(t)}) \mu_s^\top + \Sigma_s \mid \mathcal{A}_{\tau(t)} \right] \dd s,
    \end{equation*}
    where we used that the integral with respect to $\dd W_s$ is a martingale by \citet[Lemma before Thm. 28, Chap. IV]{Pro1992}, using integrability of $X$ and boundedness of $\sigma$.
    Now we can conclude that Assumption 3 holds for $Z$ similarly as before for $X$, again using integrability of $X$ and boundedness and continuity of $\mu, \sigma$.
\end{proof}

\section{Details for Implementation}\label{sec:Details for implementation}

\subsection{Differences between the Implementation and the Theoretical Description of the NJODE}\label{sec:Differences between the Implementation and the Theoretical Description of the PD-NJODE}
Since we basically use the same implementation of the NJODE, all differences between the implementation and the theoretical description listed in \citet[Appendix~D.1.1]{krach2022optimal} also apply here.

\subsection{Details for Synthetic Datasets}\label{sec:Details for Synthetic Datasets}
Below we list the standard settings for all synthetic datasets. Any deviations or additions are listed in the respective subsections of the specific datasets.

\paragraph{Dataset} 
We use the Euler scheme to sample paths from the given stochastic processes on the interval $[0,1]$, i.e., with ${T}=1$ and a discretisation time grid with step size $0.01$ leading to $101$ grid points. At each time point we observe the process with probability $p=0.1$. We sample $20'000$ paths of which $80\%$ are used as training set and the remaining $20\%$ as validation set.

\paragraph{Architecture}
We use the NJODE with the following architecture. The latent dimension is $d_H = 100$ and all 3 neural networks have the same structure of 1 hidden layer with $\operatorname{ReLU}$ activation function and $50$ nodes. The signature is not used, the encoder is recurrent and the both the encoder and decoder use a residual connection. The inputs to the neural ODE are not scaled.

\paragraph{Training}
We use the Adam optimizer with the standard choices $\beta = (0.9, 0.999)$, weight decay of $0.0005$ and learning rate $0.001$. Moreover, a dropout rate of $0.1$ is used for every layer and training is performed with a mini-batch size of $200$ for $200$ epochs.
The NJODE models are either trained with the loss function \eqref{equ:Psi} or with \eqref{equ:Psi noisy}, depending on whether the baseline or the instantaneous estimators are learned. The model's diffusion output $G^\theta_2$ is squared to obtain $S^\theta = G^\theta_2 (G^\theta_2)^\top$, which is passed to the respective loss function. For learning the process $Z$, we use $Z_{t_i} = 0$ as additional input at observation times $t_i$, which ensures easier learning of the jumps to $0$.
In the baseline training, we do not use the long-term prediction training since we already train on a dataset with very irregular, and only a few, observations per sample, which has the same effect.

\paragraph{Model selection via early stopping}
We report the results for the best early stopped model, selected based on the validation loss. For some models, we only allow for early stopping after $100$ epochs, if they would otherwise stop before epoch $90$.

\subsubsection{GBM 1-step ahead training}
\paragraph{Dataset} 
The dataset is generated as detailed before, but with observation probability $p=1$, meaning that all $101$ grid points are observed for all samples.

\paragraph{Training} 
For the purpose of this analysis, we do not train with the long-term prediction method, which would be recommended for dense observations.

\section{Estimation of the Ornstein-Uhlenbeck Parameters}\label{sec:Estimation of the Ornstein-Uhlenbeck Parameters}
Given a set of $N \in \N$ independent path realisations 
of an OU process $X$ \eqref{equ:OU}, which is observed on a regular grid with $\nu+1 \in \N$ grid points (for simplicity, assumed to be indexed by integers, $(X_t)_{0\leq t \leq \nu}$), we can estimate the corresponding parameters of the OU process as follows. 
First, we note that the solution of the SDE \eqref{equ:OU} can be written in closed form for $s < t$ and $\Delta = t-s$ as
\begin{equation*}\label{equ:OU solution}
    X_t = X_s e^{-\kappa \Delta} + \theta (1 - e^{-\kappa \Delta}) + \frac{ \sigma \sqrt{1-e^{-2\kappa \Delta}}}{\sqrt{2 \kappa}} \epsilon,
\end{equation*}
where $\epsilon \sim N(0,1)$ is a standard normal random variable\footnote{More precisely this is a weak formulation of the solution, while the strong formulation holds for $s=0$ upon replacing $\sqrt{1-e^{-2\kappa \Delta}} \epsilon$ by $W_{1-e^{-2\kappa \Delta}}$.}.
Then we fit the parameters $\alpha, \beta$ of a linear regression model that regresses the next value of $X$ on the current one, i.e.,
\begin{equation*}
    X_{t+1} = \alpha + \beta X_t + \tilde\epsilon,
\end{equation*}
using all $\nu$ pairs of consecutive observations $(X_t, X_{t+1})$ of all $N$ paths. From \eqref{equ:OU solution} we infer that the regression parameters have to satisfy
\begin{equation*}
    \beta = e^{-\kappa \Delta}, \quad 
    \alpha = \theta (1 - e^{-\kappa \Delta})
\end{equation*}
and that the residuals $\tilde \epsilon = X_{t+1} - (\alpha + \beta X_t)$ have the variance 
$$\operatorname{Var}(\tilde \epsilon) = \frac{ \sigma^2 (1-e^{-2\kappa \Delta})}{2 \kappa}.$$
Hence, we can compute the OU parameters as 
\begin{equation*}
    \kappa = \frac{- \log(\beta)}{\Delta}, \quad \theta = \frac{\alpha}{1-\beta}, \quad \sigma = s \frac{\sqrt{2 \kappa}}{\sqrt{1-\beta^2}},
\end{equation*}
where $s = \sqrt{\operatorname{Var}(\tilde \epsilon)}$ is the standard deviation of the residuals.

\end{document}